\documentclass{article} 
\usepackage{iclr2024_conference,times}


\usepackage{amsmath,amsfonts,bm}









\def\eqref#1{equation~\ref{#1}}









\def\1{\bm{1}}










\DeclareMathAlphabet{\mathsfit}{\encodingdefault}{\sfdefault}{m}{sl}
\SetMathAlphabet{\mathsfit}{bold}{\encodingdefault}{\sfdefault}{bx}{n}











\newcommand{\R}{\mathbb{R}}



\usepackage{bm}
\usepackage{amsmath,amsthm,amssymb}

\usepackage{mathtools}
\usepackage{booktabs}
\usepackage{bigints}        
\newcommand{\bs}[1]{\bm{#1}}

\newcommand{\X}[1][]{\bs X #1}
\newcommand{\x}[1][]{\bs x #1}
\newcommand{\vel}[1][]{\bs v #1}

\newcommand{\boundary}[1][]{\delta \Omega #1}
\newcommand{\domain}[1][]{\Omega #1}
\newcommand{\f}[1][]{f_\Theta #1}
\newtheorem{theorem}{Theorem}
\newtheorem{remark}{Remark}
\newtheorem{definition}{Definition}

\DeclarePairedDelimiter{\abs}{|}{|}
\DeclarePairedDelimiter{\norm}{\|}{\|}
\newcommand{\inv}{^{\raisebox{.2ex}{$\scriptscriptstyle-1$}}}

\usepackage{hyperref}
\usepackage{url}

\title{Lagrangian Flow Networks for \\Conservation Laws}



\newcommand{\unibas}{%
Department of Mathematics and Computer Science\\
University of Basel 
}

\author{%
  Fabricio Arend Torres, Marcello M. Negri, Marco Inversi, Jonathan Aellen \& Volker Roth\\
  \unibas \\
  \texttt{\{fabricio.arendtorres, marcellomassimo.negri}, \\ 
   \texttt{ marco.inversi, jonathan.aellen, volker.roth\}@unibas.ch} \\
}

%

\iclrfinalcopy 
\begin{document}

\maketitle

\begin{abstract}
We introduce \textit{Lagrangian Flow Networks} (LFlows) for modeling fluid densities and velocities continuously in space and time.
By construction, the proposed LFlows satisfy the continuity equation,
a PDE describing mass conservation in its differentiable form. 
Our model is based on the insight that solutions to the continuity equation can be expressed as
time-dependent density transformations via differentiable and invertible maps.
This follows from classical theory of the existence and uniqueness of Lagrangian flows for smooth vector fields.
Hence, we model fluid densities by transforming a base density with parameterized diffeomorphisms conditioned on time.
The key benefit compared to methods relying on numerical ODE solvers or PINNs is that the analytic expression of the velocity is always consistent with changes in density.
Furthermore, we require neither expensive numerical solvers, nor additional penalties to enforce the PDE.
LFlows show higher predictive accuracy in density modeling tasks compared to competing models in 2D and 3D,
while being computationally efficient.
As a real-world application, we model bird migration based on sparse weather radar measurements.
\end{abstract}

\section{Introduction}

The development of physics-informed Machine Learning (PI-ML) \citep{physml:karniadakis2021physics} opens new opportunities to 
combine the power of modern ML methods with physical constraints that serve as meaningful regularizers.
These constraints might, for example, be available in the form of partial differential equations (PDEs).
Within PI-ML we consider hydrodynamic flow problems governed by the physical law of mass conservation.
This law is described in its local and differentiable form by a PDE commonly known as the \textit{continuity equation} (CE)
%
\begin{align}
\label{eq:continuity_eq:euler}
&\left\{
\begin{aligned}
\partial_t \rho + \nabla \cdot (\vel \rho) & = 0 & (t,\x) \in (t_0, T)\times \domain,
\\
\rho(t_0,\x) & = \rho_{t_0} (\x) & \x \in \domain.
\end{aligned} \right. 
\end{align}
For any time $t \in [t_0,T)$ the function $\rho(t, \cdot)$ can be thought of as the density of parcels advected by the velocity field $\vel$, with initial density $\rho_{t_0}$.
Here, $[t_0, T] \times \domain$ is the space-time domain, which is a subset of $\R \times \R^d$.
The partial derivative w.r.t.~time $t$ is denoted by $\partial_t$ and $\nabla \cdot \bs b = \nabla_{\x} \cdot \bs b = \sum_{i=1}^{d} \frac{\partial {\bs b}_i}{\partial x_i}$ is the spatial divergence
of a $d$ dimensional vector field $\bs b: [t_0,T] \times \Omega \mapsto \R^d $. 

Unlike classical initial value problems, we consider challenging settings where exact boundary and initial conditions are unknown.
That is, the continuous density and velocity fields have to be inferred from sparse and noisy data.
The important physical constraint is that the solution must comply with the CE in Eq.~\ref{eq:continuity_eq:euler}.
To this end we propose a neural network based model that fulfills the CE by construction and provides physically consistent velocity and density fields.

We specifically consider two distinct application settings.
In both settings we are restricted to sparse and noisy measurements.
In setting (i) we measure the fluid density $\rho$ and velocity $\vel$, without knowing any additional equations aside from the CE.
This occurs for example within the area of radar ornithology \citep{chilson2017radar},
where density and velocity of birds can be inferred from radar data.
Such radar-based measurements are to date the only practical high-throughput data source for birds.
The goal is to spatiotemporally estimate brid migration densities \citep{nussbaumer2019geostatistical}.
Since more migration-specific dynamics are unknown, mass conservation is usually considered as a physical constraint \citep{nussbaumer2021quantifying, lippert2022learning, lippert2022physics}.
In setting (ii) we exclusively have sparse density observations, but we know additional equations constraining the velocity. 
This might, for example, occur in dynamical optimal transport problems. 
Here, two densities are to be interpolated while adhering to the CE. 
Therefore, minimizing the total transport cost constrains the velocity field.
More generally, setting (ii) includes a wide range of compressible fluids dynamic problems.
In both settings we are mainly interested in accurately modeling the density, with the velocity measurements or additional equations serving as an informative prior.

\paragraph{Main contributions.}
The main contributions of this paper are as follows: 
\begin{itemize}
\item 
We outline a fundamental link between densities modeled by conditional Normalizing Flows and spatiotemporal density fields that satisfy the  CE.
\item 
We leverage this link to introduce models for ill-posed hydrodynamic flow problems that always satisfy the CE by construction, coined \textit{Lagrangian Flow Networks} (LFlows). We do so without requiring an explicit representation of the initial density.
\item We provide a way to calculate the velocity without inverting the conditional normalizing flow, enabling the use of flexible bijective layers with otherwise costly inverses.
\item We assess LFlows in multiple application settings and show better predictive performance than existing methods while staying computationally feasible and physically consistent.
\end{itemize}

\section{Related Work}

\paragraph{Neural Networks for Conservation Laws.}
Physics-informed neural networks (PINNs) \citep{PINN:raissi2019physics} enforce PDEs in neural networks by introducing an additional PDE-loss that penalizes pointwise deviations from the PDE.
The PDE-loss is enforced on so-called \textit{collocation points} that are sampled in the signal domain.
The accuracy of PINNs is thus limited by the amount (and distribution) of sampled collocation points, as well as the dimension of the signal domain.
For conservation laws, recent improvements in PINNs suggest more sophisticated sampling approaches \citep{PINN:torres2022mesh}, or introduce domain decompositions \citep{PINN:discrete_conservation:jagtap2020conservative}.
Although this alleviates scaling problems, the fundamental limitation due to the number of possible collocation points (or subdomains) still remains.

In contrast, \citet{physml:richter2022neural} propose a parameterization of neural networks that enforces mass conservation by design, which we refer to as \textit{divergence-free Neural Networks} (DFNNs).
Solutions to the CE in Eq.~\ref{eq:continuity_eq:euler} are represented as divergence-free $(d+1)$ dimensional vector field $\bs b=(\rho, \rho\vel)$ with an augmented $(d+1)$ dimensional input space $\bs s = (t, \x)$:
\begin{equation}
   \frac{\partial \rho}{\partial t} + \nabla_{\x} \cdot (\rho \vel) 
   =\sum_{i=1}^{d+1} \frac{\partial b_i}{\partial s_i} 
    = \nabla_{\bs s} \cdot \begin{pmatrix}
        \rho \\ \rho \vel 
    \end{pmatrix}
   =
   \nabla_{\bs s} \cdot \bs b 
    = 0 .
\end{equation}
The generalization of divergence-free vector fields to higher dimensions is achieved through the concept of differential forms.
The resulting parameterization, however, heavily relies on expensive higher-order automatic differentiation, posing limitations in terms of scalability.

Concurrent to our work, \citet{TIPF} (TIPF) proposes the use of Lagrangian flow maps to model continuous probability flows that fulfill the CE.
Unlike LFlows, TIPF considers well-posed PDE settings with known initial conditions. 
Specifically, they focus on Fokker-Planck equations and Wasserstein gradient flows by additionally introducing an unbiased self-consistency loss. 
In contrast, we consider ill-posed data-assimilation for physical problems with sparse and noisy data where initial conditions are unknown.
Additionally, LFlows stand out as they don't require the inversion of bijective layers for computing the velocity, allowing for more expressive bijections.

Deep operator learning \citep{lu2019deeponet, li2020fourier} was proposed as a general approach to learn dynamics from dense observations. 
In these settings, the PDE is not provided, but has to be learned from large amounts of dense data often provided through simulations. 
As such, it is not applicable to our setting with spatially sparse data obtained at irregular time steps.
Further notable mentions are Lagrangian and Hamiltonian neural networks \citep{cranmer2020lagrangian, greydanus2019hamiltonian}, 
which can learn conservation laws from observed trajectories of individual particles. 

\paragraph{Data assimilation with the Adjoint.}
The adjoint method \citep{cacuci1981sensitivitya, cacuci1981sensitivityb, pontryagin1987mathematical} allows to differentiate through numerical solvers by integrating adjoint equations backward in time.
By minimizing an objective function, it is then possible to infer the initial conditions and parameters of a dynamical system from data.
Within adjoint methods, the well-established semi-Lagrangian data assimilation (SLDA) approach is the closest one conceptually to our  model and setting \citep{lagr:robert1982semi, lagr:staniforth1991semi, diamantakis2016sensitivity}.
SLDA is widely used for for integrating transport equations into atmospheric models \citep{diamantakis2013semi, lagr:hersbach2020era5}.
It is based on the Lagrangian form of the CE, which can be written as an ODE:
\begin{equation}
\label{eq:CNF}
\left[
\begin{array}{l}
     \phantom{\ln}\x(0, \bs z) 
     \\
     \ln \rho(0, \bs z) 
 \end{array}
 \right]
 =
\left[
\begin{array}{l}
     \bs z
     \\
     \ln \rho_0(\bs z)
 \end{array}
 \right],
 \qquad \qquad 
\partial_t 
 \left[
 \begin{array}{l}
     \phantom{\ln }\x(t, \bs z) 
 \\
    \ln\rho(t, \bs z) 
\end{array}
\right]
=
 \left[
 \begin{array}{l}
     \phantom{- \nabla \cdot \ \  } \vel(t, \x(t, \bs z)) 
 \\
    - \nabla \cdot \vel(t, \x(t, \bs z)) 
\end{array}
\right],
\end{equation}
where $\bs z \in \R^d$, $\vel: [t_0,T)\times \R^d \mapsto \R^d, t\in (t_0,T)$.
Given the initial position $\bs{z}$ and (log-) density $\ln \rho_0(z)$ of the parcel, the ODE describes their temporal evolution according to the CE.
The density at the departure points (i.e. the initial time) is represented by an interpolated mesh.
The data-loss for the density is computed by mapping observations backwards in time with Eq.~\ref{eq:CNF} and then comparing it to the interpolated initial density.


An efficient autograd implementation of the adjoint method was presented by \citet{nflows:chen2018neural}. 
The implementation enables black-box differentiation for numerically solved ODEs, and further allows to specify the dynamics of ODEs with a neural network.
The introduced Continuous Normalizing Flows can be regarded as a special case of the SLDA that propagates probability densities and models the velocity with a neural network, while fixing the density at the departure points.
%
%
A limiting factor of neural adjoint based methods is their computational cost,
since input derivatives of the velocity $\vel$ are repeatedly evaluated for every step of the solver.
Furthermore, dynamics given by neural networks can become stiff during training.
To avoid these issues \citet{bilovs2021neural} propose time-dependent bijections instead of neural ODEs for modeling time series data. 



\paragraph{(Conditional) Normalizing Flows}
Normalizing Flows (NFs) are a general approach to warping a simple probability distribution into a more complex target distribution via invertible and differentiable transformations, i.e. diffeomorphisms.
Let $\bs R\in \mathbb{R}^d$ be a random variable with a known density function $\bs{R} \sim p_{\bs R}(\bs r)$ and let $\bs Y = \mathcal{T}(\bs R)$, where $\mathcal{T}$ is a diffeomorphism with trainable parameters.
With a change of variables the probability density of $\bs Y$ can be expressed in terms of the \textit{base density} $p_{\bs{R}}$, the map $\mathcal{T}$, and its Jacobian:
\begin{equation}
\label{eq:change_of_variables}
    p_{\bs Y}(\bs y) = p_{\bs R}(\mathcal{T}\inv(\bs y)) \big|\det J{\mathcal{T}\inv}(\bs y)\big| .
\end{equation}
NFs usually rely on transformations for which the Jacobian determinant can be efficiently and easily calculated.
A parameterization for conditional distributions $p_{\bs Y}(\bs y| \bs c)$ can be obtained by additionally conditioning the parameters of $\mathcal{T}$ on another variable $\bs{c}$ through a hypernetwork \citep{nflows:ha2016hypernetworks}.
This is commonly called a conditional Normalizing Flow \citep{nflows:atanov2019semi, nflows:kobyzev2020normalizing}.
For a review of NFs, we refer to \cite{nflows:kobyzev2020normalizing} and \cite{nflows:papamakarios2021normalizing}.


\section{Lagrangian Flow Networks}
\label{section:lagrangian_flow_networks}
We first present some key results of the classical theory of Lagrangian flows for smooth vector fields. 
These provide us a framework for evolving densities and velocities that always fulfill the CE.
We then propose parameterizations that result in simple expressions for the velocity and density.
The resulting LFlow models densities and velocities by building upon conditional Normalizing Flows.
%
\begin{figure*}[ht!]
    \centering
    \includegraphics[width=0.9\linewidth]{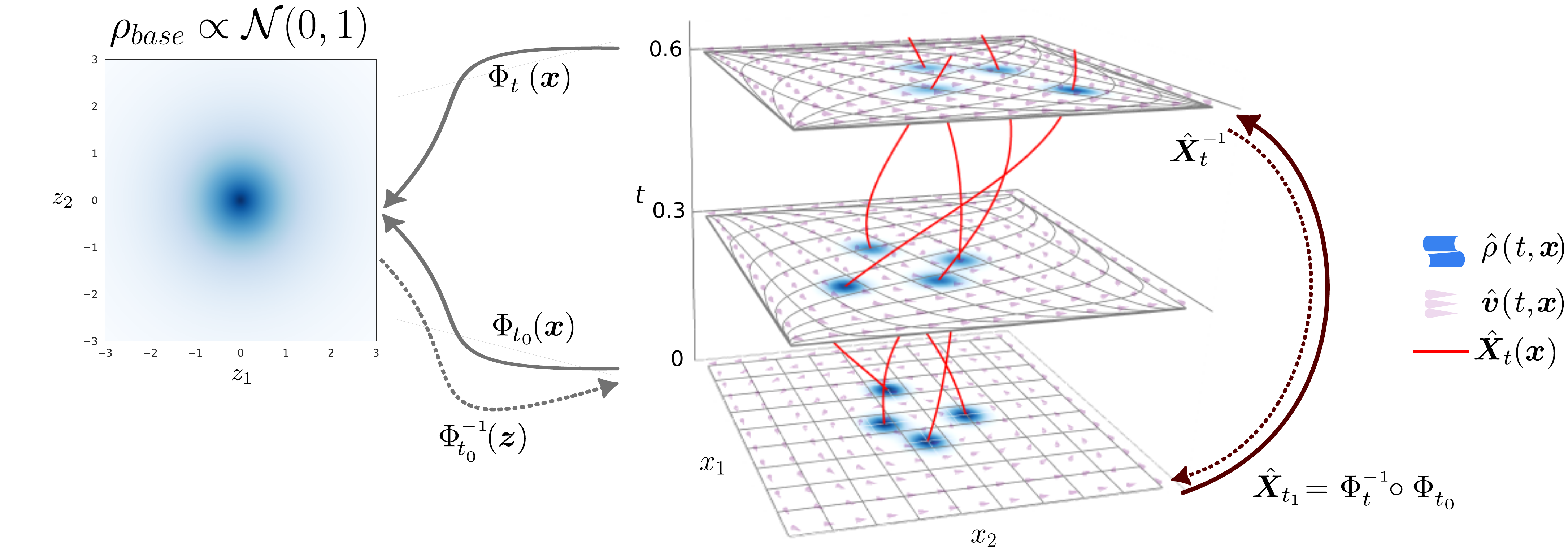}
    \caption{Illustration of the transformations and involved fields for modeling the temporal evolution of a 2D density with LFlows. The red lines inbetween the planes indicate trajectories of fluid parcels.}
\label{figure:visual_abstract}
\end{figure*}

\subsection{Flow Maps and the Continuity Equation}
The Lagrangian view describes fluids from the perspective of moving fluid parcels, i.e. infinitesimal volumes with constant mass.
From this point of view the CE states that density changes of the fluid are described by volume changes of parcels.
That is, spatial contraction increases the density of a parcel, and expansion decreases it. In order to compute the density of any parcel, we then only need to know its initial density, and how much its volume was distorted. 

More formally, let $\x_{t_0}$ denote the initial position of a parcel at time $t_0$.
In addition, let $\X_t: \Omega \mapsto \Omega$ for a fixed $t\in [t_0,T]$ be a diffeomorphism that maps $\x_{t_0}$ to the parcel position at time $t$:
\begin{align}
\label{eq:hat_Xt}
    \X_{t}(\x_{t_0}) &= \x_{t} ,
\end{align}
with $\X_{t_0}$ being the identity map.
That is, $\X_t$ provides the continuous trajectory of the parcel $\x_{t_0}$.
We further assume basic regularity of $\X_t$ and $\X_t\inv$ such as 
smoothness and globally bounded derivatives.
Since $\X_t$ provides the trajectory of a parcel, the velocity of a given parcel at position $\x_t$ and time $t$ follows naturally.
First, the parcel is mapped back to its initial position with $\X_t^{-1}$. 
The velocity is then the change in position along its trajectory with respect to time $t$:
\begin{equation}
\label{eq:vel:lagr}
    \vel(t, \x) =  \frac{\partial{\X}_t}{\partial t} \left( \X_t\inv(\x)\right) .
\end{equation}
Following classical Cauchy Lipschitz theory, it is known that such a map $\X_t$ is the unique flow map of $\vel$ starting at time $t_0$. 
Specifically, for any $\x \in \domain$ the curve $t \mapsto {\X}_t(\x)$ is the unique solution to the Cauchy Problem
\begin{align}
\label{eq:vel:lagr_flow}
&\left\{
\begin{aligned}
\partial_t {{\X}}_{t\ }(\x)  &= \bs v\left(t, {{\X}}_{t} (\x)\right) & t \in [t_0,T),
\\
{{\X}}_{t_0} (\x) &=  \x .
\end{aligned}\right .
\end{align}
A more complete statement is given in \ref{appendix:theorems} Theorem \ref{theorem:1}, and we refer to \citet{Hartman} for an extensive description of the theory of ordinary differential equations. 

With the velocity given by Eq.~\ref{eq:vel:lagr}, we further need to define a density to describe a fluid.
Let $\rho_{t_0}: \Omega \mapsto \R_+$ be the (known) initial fluid density at time $t_0$.
We can then define the time-evolved density as a transformation of $\rho_{t_0}$ using the change of variables formula: 
\begin{align}
\label{eq:density:lagr_flow}
\rho(t, \x) &=
    \rho_{t_0}\Big({\bs X}_{t}\inv(\x)\Big)\abs{\det J {\X}_{t}\inv (\x)}.
\end{align}
Given the velocity in Eq.~\ref{eq:vel:lagr} and a smooth $\rho_{t_0}$, Eq.~\ref{eq:density:lagr_flow} is a solution to the continuity equation (see \ref{appendix:theorems} Theorem \ref{theorem:2}).
Proofs for this statement vary significantly in their complexity and depend on the regularity assumptions for the velocity. 
That is, they range from classical theory to current mathematical research.
For the sake of completeness, we provide precise statements, assumptions and proofs in the Appendix Section \ref{appendix:theoretic_backgrond}.
The appended proofs are based on the well-established classical theory for the existence and uniqueness of Lagrangian flows for smooth vector fields and we refer to \citet{Ambrosio-Crippa} for basic and advanced results.
 

\subsection{Lagrangian Flow Networks}
We can now exploit the derived connection between the CE and time-evolving diffeomorphisms to model densities and velocities that satisfy the CE by construction.
Instead of directly parameterizing both $\X_t$ and $\rho_{t_0}$ (as in \citet{TIPF}), we model the density at each time $\rho_t$, including $\rho_{t_0}$, as a transformation of a simple fixed density $\rho_{\text{base}}$.
This requires only a single time-conditioned bijection $\Phi_t$. 
We call the resulting model Lagrangian Flow Networks (LFlows) and provide a high-level illustration in Figure~\ref{figure:visual_abstract}.

Let $\Phi_{t}: \Omega \mapsto \mathbb{R}^d$ be a learnable diffeomorphism with $t\in [t_0,T]$. 
We propose to parameterize $\X_t$ as the composition
\begin{equation}
\label{eq:parameterized_X}
    \hat{\X}_t(\x) = \Phi_{t}\inv\left( \Phi_{t_0}(\x) \right).
\end{equation}
In practice, we implement $\Phi_t$ as an invertible neural network with its parameters conditioned on time. 
We further define the initial density as a transformation of a simple base density:
\begin{equation}
\label{eq:initial_density}
    \hat{\rho}_{t_0}(\x)=\rho_{\text{base}}\Big(\Phi_{t_0}(\x)\Big) \  
    \left|\det J{\Phi_{t_0}} \big(\x \big)\right| .
\end{equation}
The base density $\rho_{\text{base}}: \R^d\mapsto \R_+$ is an unnormalized probability density:
\begin{equation}\label{eq:base_density}\rho_{\text{base}}(\bs z) = c \cdot \mathcal{N}(\bs 0, {I}), \end{equation}
where $c\in\R_{+}$ is the total mass of the system and a freely learnable parameter.

We now substitute $\X_t$ in Eq.~\ref{eq:density:lagr_flow} with the parameterized $\hat{\X_t}$ from Eq.~\ref{eq:parameterized_X}.
We further substitute the density $\rho_{t_0}$ with the parameterized $\hat{\rho}_{t_0}$ (Eq.~\ref{eq:initial_density}). 
The modeled density then simplifies to
\begin{align}
\label{eq:lflow_density}
    \hat{\rho}(t, \x) &= \hat{\rho}_{t_0}\left( \hat{\X}_t\inv (\x) \right)\ \abs{\det J {\hat{\X}}_{t}\inv (\x)}
    =  \rho_{\text{base}}\Big(\Phi_t(\x)\Big) \  
    \left|\det J{\Phi_t} \big(\x \big)\right| .
\end{align}
We refer to the Appendix \ref{appendix:calc_density} for the intermediate steps.
Note that the resulting expression coincides with the change of variable formula for probability densities in Eq.~\ref{eq:change_of_variables}.
This allows us to elegantly model the evolving density through a conditional normalizing flow with unnormalized base density $\rho_{\text{base}}$ and bijective layers conditioned on time $\Phi_t$.

The parameterization of $\X_t$ with $\hat{\X_t}$ in Eq.~\ref{eq:vel:lagr} also results in a simple expression for the velocity:
\begin{align}
\hat{\vel}(t, \x) 
          =  \frac{\partial{\hat{\X}}_t}{\partial t} \left( \hat{\X}_t\inv(\x)\right) 
        = 
        \label{eq:velocity:forward}
            - \Big( J{\Phi_{t}} \big(\x \big)\Big)\inv \ \frac{\partial \Phi_t}{\partial t} \left( \x \right). 
\end{align}
We provide the explicit steps for arriving at Eq.~\ref{eq:velocity:forward} in the Appendix \ref{appendix:calc_velocity}.
Note that in order to evaluate $\hat{\rho}$ and $\hat{\vel}$ we now require only the forward map $\Phi_t$, but not its inverse.
This proves useful for layers with expensive inverses, or if the inverse is unknown.
An illustration that unifies the Lagrangian view and the provided parametrization of LFlows is given in Figure \ref{figure:visual_abstract}.



\paragraph{Limitations.}
LFlows model fluid densities and velocities by transforming a base density with bijective layers.
As such, similar limitations as for Normalizing Flows apply.
If the target density has disconnected modes, the base density must have the same number of disconnected modes due to topological constraints \citep{nflows:papamakarios2021normalizing}.
If not, the space inbetween disconnected modes will be covered by a small but non-zero density.
Furthermore, LFlows are limited by the expressive power of the bijective layers. 
Even though state-of-the-art bijective layers are highly flexible, each layer might still be limited in terms of the number of modes that can be modeled \citep{liao2021jacobian}.

\section{Implementation}
\label{section:implementation}
To implement LFlow as outlined in Section \ref{section:lagrangian_flow_networks}, we require conditional bijective layers.
That is, the diffeomorphism $\Phi_t$ required for Eq.~\ref{eq:lflow_density} and Eq.~\ref{eq:velocity:forward} has to be conditioned on time $t$.
To allow for a flexible parameterization, we first embed $t$ into a higher-dimensional space with an embedding network and then condition the bijections on this embedding, i.e.  $\Phi_t := \Phi(\x; \f(t))$ with $\f: [t_0, T] \mapsto \R^{k}$.
The $k$-dimensional embedding is shared between the individual layers.
A high-level visualization of the network architecture is provided in the Appendix Figure \ref{fig:architecture}.
We implement $\f(t)$ as MLPs with residual skip connections and swish activations \citep{swish}.

We implement flexible $\Phi(\x; \f(t))$ with Lipschitz-constrained invertible densenets \citep{perugachi2021invertible}, which have free-form Jacobians.
For a conditional variant of these layers we pass the embedding $f_\Theta(t)$ as an additional input.
That is, each invertible densenet layer is a function $g(\x, t) = \x + h(\x, f_\Theta(t))$ with $h: \R^{d+k} \mapsto \R^d$ and $Lip(h)<1$.
The activations of $h$ are sinusoidal $s(x) = \sin(\omega x)/\omega$ with $\omega \in \R_+$, where the division by $\omega$ ensures $Lip(s)=1$.
Invertible densenets provide numerical inversion via fixed-point iterations.
In addition to densenet blocks, we employ (unconditional) intermediate activation normalizations \citep{kingma_glow} and (conditional) SVD layers.
The orthogonal components of the SVD layers are parameterized by conditional Householder reflections.

\paragraph{Normalization Constant.}
In all our settings the total mass, i.e. the normalization constant $c$ in Eq.~\ref{eq:base_density}, is not known. 
Therefore, we treat $c$ as a freely learnable hyperparameter, which we initialize based on a validation set.
We further encourage solutions with a small total mass $c$ during training with the penalty $L_{\text{mass}} = w_c \cdot c$, where  $w_c\in \mathbb{R}_{\geq0}$ is a hyperparameter.
In practice, this penalty discourages learning significant densities in areas where there are no measurements.


\paragraph{Baseline Models.}
For implementation details of all competing methods, we refer to the Appendix Section \ref{appendix:implementation} and provided code.

\section{Experiments}
We showcase LFlows for two distinct settings. In setting (i), density and velocity are observed, but no equations are available (Section \ref{subsection:synthetic}  and \ref{subsection:bird_migr}). 
In setting (ii) only the density is observed but further equations for the velocity are known (Section \ref{subsection:ot}).
For details on the data, architecture, and code for each experiment, we refer to the Appendix~\ref{appendix:subsection:synthetic} to \ref{appendix:subsection:birdmigration} and the supplementary material.
We will compare LFlows with methods that enforce the CE through different means, namely divergence-free neural networks (DFNNs), physics-informed neural networks (PINNs), and semi-Lagrangian data assimilation (SLDA). For details on the implementation we refer to the Appendix~\ref{appendix:implementation}.

\paragraph{Numerical Evaluation of Physical Consistency.}
Physical consistency in terms of the CE implies that the 
predicted density at any time coincides with the inital density transformed forward in time by the learnt velocity field.
An inconsistent model would imply that the predicted velocity field fundamentally disagrees with the predicted density movements.
This would make any downstream interpretation of the two learned fields futile.
We quantitatively evaluate this potential inconsistency in the following experiments.
We compare the predicted density $\hat{\rho}_{\text{model}}(t,\x)$ with the numerical solution of the IVP defined by $\hat{\rho}_{\text{model}}(t_0,\x)$, $\hat{\vel}_{\text{model}}(t, \x)$, and Eq.~\ref{eq:CNF}. 
We evaluate the symmetric mean absolute percentage error $\text{sMAPE} = \frac{1}{n} \sum_{i=1}^{n} \frac{|\hat{\rho}_{\text{model}}(t_i, \x_i) - \hat{\rho}_{\text{ODE}}(t_i, \x_i)|}{|\hat{\rho}_{\text{model}}(t_i, \x_i)| + |\hat{\rho}_{\text{ODE}}(t_i, \x_i)|}$ that ranges from $0$ to $1$.

\subsection{Simulated Fluid Flow}
\label{subsection:synthetic}
\begin{figure}[t]
    \centering
    \includegraphics[width=\linewidth]{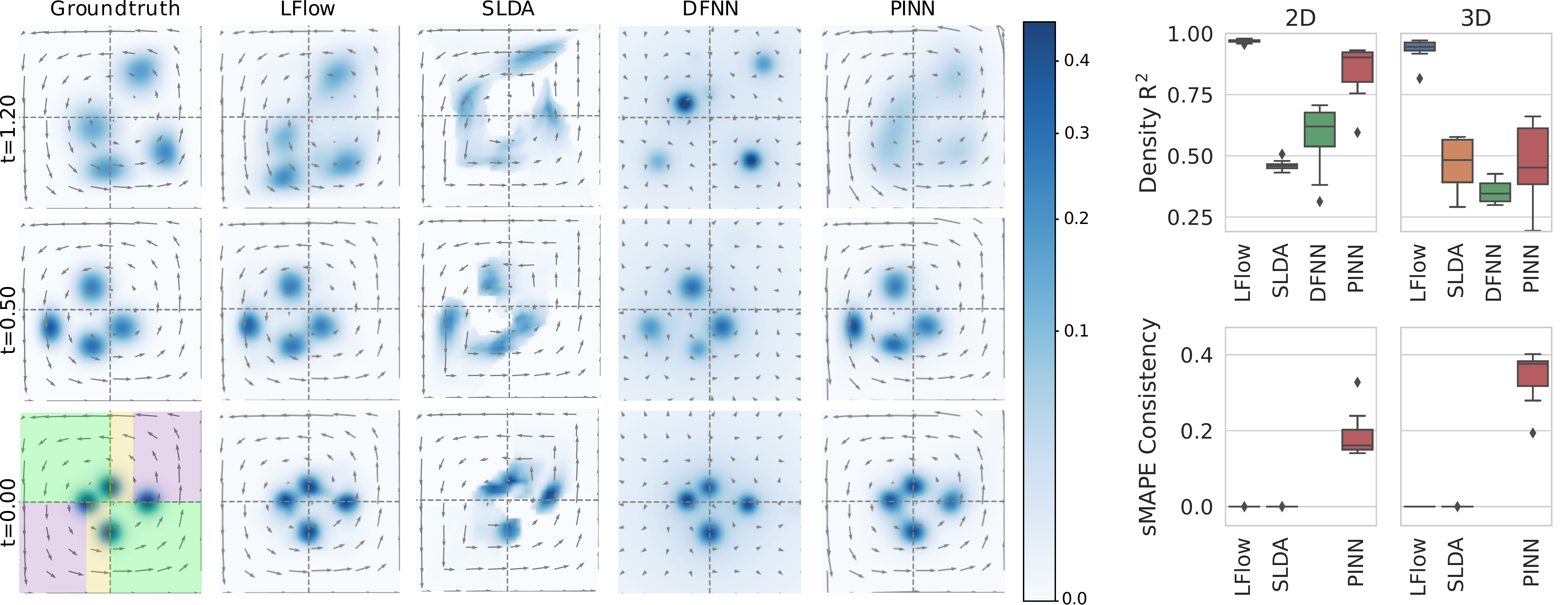}
    \caption{Evaluation of the predicted density on the $xy$-plane with $z=0$ for different methods. Arrows indicate velocity, except for the DFNNs, where the normalized flux is shown.
    The lower left plot shows the spatial splitting into train (green), validation (yellow), and test (purple) subsets.}
    \label{fig:exp3d_vis}
\end{figure}%
For a synthetic example of setting (i) we simulate densities in 2D and 3D over time by transforming a mixture of four unnormalized Gaussians.
We parameterize time-dependent bijections in $t\in[0, 1.2], \Omega = (-4,4)^d$, which provide us analytical forms for the densities and velocities.
During training, only sub-regions of the domain $\Omega$ are observed.
The dynamics in 3D are similar to the 2D setting, with the $xy$-velocity being the same for all $z$ values and the $z$ velocity being 0. 
The only added difficulty is a higher-dimensional domain.
We limit all models to the computing resources of a NVIDIA Titan X Pascal, and optimize based on the explained variance\footnote{ $R^2 =1 - \frac{MSE(y_{obs}, \hat{y})}{Var(y_{obs})}\leq 1$ with $R^2 = 1$ indicating a perfect reconstruction.} ($R^2$) of the density on the validation set. For the PINN this resulted in $2^{16}$ collocation points.
We do not include consistency results for DFNNs due to numerical instabilities.
As DFNNs only provide access to the flux $\hat{\bs F}$, the velocity $\hat{\bs v} = \hat{\bs F}/\hat{\rho}$ becomes numerically unstable in low-density regions, which are abundant in this experiment.


Results for 10 random seeds are provided in Figure \ref{fig:exp3d_vis}.
In addition, snapshots of the 3D prediction for $z=0$ are shown.
All methods aside from the PINN have a low consistency sMAPE on the order of 1e-4 or lower. 
This is expected, as LFlows enforce the CE by construction. 
Furthermore, SLDA computes the density similarly to our numerical reference, although with a lower order ODE solver.
Low error tolerances or low-order solvers for SLDA would of course still result in inconsistencies. 
Looking at the predictive performance, LFlows show the highest average $R^2$ for the density in both 2D and 3D.
While PINNs perform competitively in 2D, they severely degrade in 3D,
as the number of collocation points used (limited by GPU memory) is insufficient to enforce the PDE.
This is also reflected in their increase of the consistency sMAPE in 3D.
Finally, DFNNs and SLDA are roughly comparable in terms of predictive accuracy, with DFNNs being in our experience most prone to overfitting.
We note that small displacements of the density can already lead to large differences in $R^2$.
\subsection{Dynamical Optimal Transport}
\label{subsection:ot}
As an example of setting (ii), in which no velocity is observed but additional equations dictating the dynamics are known, we consider dynamical optimal transport problems.
Our experimental setting closely follows \citet{physml:richter2022neural}.
Specifically, we consider the Benamou-Brenier formulation of the optimal transport problem between two densities $p_{t_0}$ and $p_{t_1}$.
In this case the optimal transport map is the solution map $\X_{t}$ of a flow that is defined by the vector field $\vel$ and minimizes the following objective:
\begin{equation}
\label{eq:dynamical_optimal_transport}
\min_{\vel, \rho} \bigintssss_{t_0}^{t_1} \bigintssss_{\Omega}{\abs{\vel (t, \x)}^2  \rho(t, \x) \, d x \, dt} 
\end{equation}
subject to the constraints $\rho(t_0, \x) =p_{t_0} (\x)$ and  $\rho(t_1, \x) =p_{t_1}(\x)$.
Furthermore, $\rho$ and $\vel$ are subject to the continuity equation  $\partial_t \rho = - \nabla \cdot \left( \rho \vel \right)$.

Both DFNNs and LFlows can solve the minimization problem in Eq.~\ref{eq:dynamical_optimal_transport} without needing a separate estimation of $\rho$. 
Instead, one fits the densities at $t_0$ and $t_1$ and additionally minimize  Eq.~\ref{eq:dynamical_optimal_transport}.
However, to obtain the transport map from the learned velocity field, DFNNs need to numerically solve the Cauchy problem in Eq.~\ref{eq:vel:lagr_flow}.
An ODE solver might however struggle due to the numerical instability of the DFNNs velocity in low-density regions.
In contrast, LFlows elegantly provide an analytical form for the continuous transport map through the learned bijections, i.e. $\hat{\X}_t(\x) = \Phi_{t}^{-1}\left( \Phi_{t_0}(\x) \right)$.
%
%
\begin{figure}[h]%
    \centering
    \includegraphics[width=\textwidth]{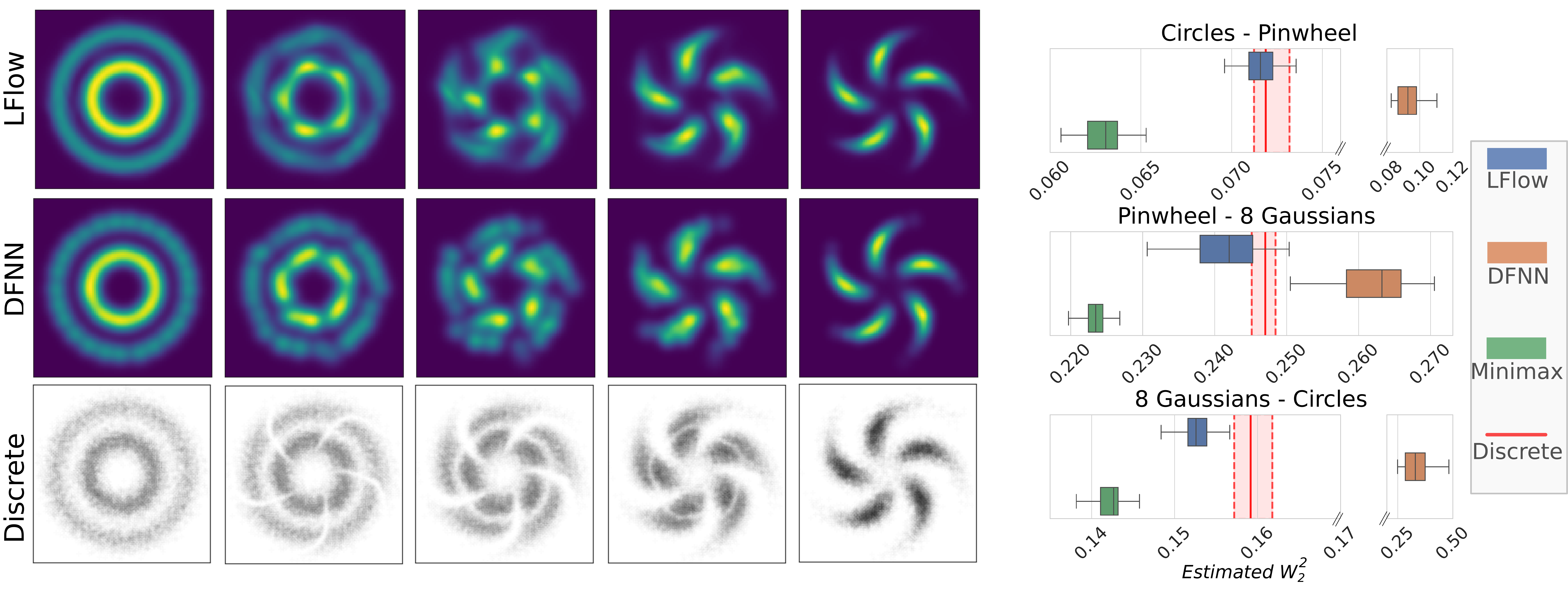}
    \caption{\textit{Left:} Approximations of the 2D optimal transport map with LFlows, DFNN and a discrete reference.
    \textit{Right:} Corresponding estimated Wasserstein distances for different methods for a single run. 
    The red vertical lines denote the minimum, median, and maximum estimates of 5 runs with a discrete OT solver.
    Datasets: Circles $\leftrightarrow$ Pinwheel, Pinwheel $\leftrightarrow$ 8Gaussians, 8Gaussians  $\leftrightarrow$ Circles.
    }
    \label{fig:ot2d}
\end{figure}%
%

We train the models by optimizing
\begin{equation}
    \min_{\hat{\rho}, \hat{\vel}} \lambda \mathbb{E}_{\Tilde{p}_{0}}\left[ \abs{\hat{\rho}(0, \x)- p_{0}(\x)} \right] + \lambda \mathbb{E}_{\Tilde{p}_{1}}\left[ \abs{\hat{\rho}(1, \x)- p_{1}(\x)} \right] + \bigintssss_{0}^{1} \bigintssss_{\Omega}{\abs{\hat{\vel} (t, \x)}^2  \hat{\rho}(t, \x) \, d x \, dt} 
\end{equation}
where data is drawn from $\Tilde{p}_{i}$, which is a mixture of $p_{i}$ and a uniform density (for $i=0, 1$); $\lambda$ is a hyperparameter.
At test time we empirically estimate the $W_2^2$ distance by mapping $5 000$ samples of $p_0$ from $t=0$ to $t=1$.
Different to \citet{physml:richter2022neural} we repeat this estimate 50 times.
We compare the $W^2_2$ estimates of (i) LFlows, (ii) DFNNs and (iii) a minimax formulation of the optimal transport map 
learned via input convex neural network 
\citep{makkuva2020optimal}.
For DFNNs the samples are transported through the estimated velocity with an ODE solver.
We further compare to the $W_2^2$ estimates of a discrete OT-solver \citep{bonneel2011displacement} from the \textit{pot} library \citep{flamary2021pot} based on 50000 samples.   
The $\lambda$ for LFlows is chosen by matching the $W_2^2$ distance between the \textit{moons} and \textit{swissroll} datasets with the discrete estimate. For DFNNs a $\lambda$ value is provided by  \citet{physml:richter2022neural}.
In this experiment we restrict ourselves to methods that can exactly enforce the CE and thus exclude PINNs.

We considered three different pairs of toy 2D distributions.
Figure \ref{fig:ot2d} shows the approximated optimal transport maps learned with LFlows and the $W_2^2$ estimates of the different methods.
We verified that the DFNN and the LFlow fit the target densities well, with a test MSEs below 8e-5 for all settings. 
We excluded SLDA, as it was unstable and did not consistently result in low errors for the two target densities.
The LFlow estimates of $W_2^2$ are closest to the range of discrete estimates (Figure \ref{fig:ot2d}).
In contrast, the minimax model underestimates the distance which is consistent with the results of \citet{physml:richter2022neural}.
DFNNs significantly overestimated $W^2_2$ and we were unable to fully reproduce results obtained by \citet{physml:richter2022neural}.
We assume that this is due to the ODE solver struggling with the unstable velocity calculation in low-density regions.


\subsection{Modeling Bird Migration}
\label{subsection:bird_migr}
As a real-world application for setting (i) we model bird migration within Europe based on weather radar measurements.
The data provided by \citet{nussbaumer2021quantifying} contains estimated bird densities ($birds/km^3$) and velocities ($m/s$).
Measurements are taken from 37 weather radar stations in France, Germany, and the Netherlands at up to 5-minute intervals at 200$m$ altitude bins, reaching up to 5$km$. 
The velocity data does not include a $z$-axis component.
We model the bird migration of 3 subsequent nights of April 2018.
We assume that the mass is mostly conserved within the three nights during migration. 
We test the predictions on radars located in the center of the covered region, which were excluded during training (see Figure \ref{fig:birds_europe}).
Hyperparameters of all models are selected by minimizing the density MSE on three nights of March 2018. 
As a baseline, we compare the results with a 10-layer multilayer perceptron (MLP) with skip connections, 256 hidden units per layer, ReLU activations, and Batch normalization.
We additionally evaluate the PINN, DFNN, and SLDA.
The PINN has the same general architecture as the MLP but additionally minimizes the penalty $\lambda \cdot \norm{\partial_t\hat{\rho} + \nabla\cdot (\hat{\rho}\hat{\vel})}_2$ evaluated on $100\ 000$ collocation points, where $\lambda \in \R_+$ is a hyperparameter. 

\begin{figure}[h]
    \centering
    \includegraphics[width=\linewidth]{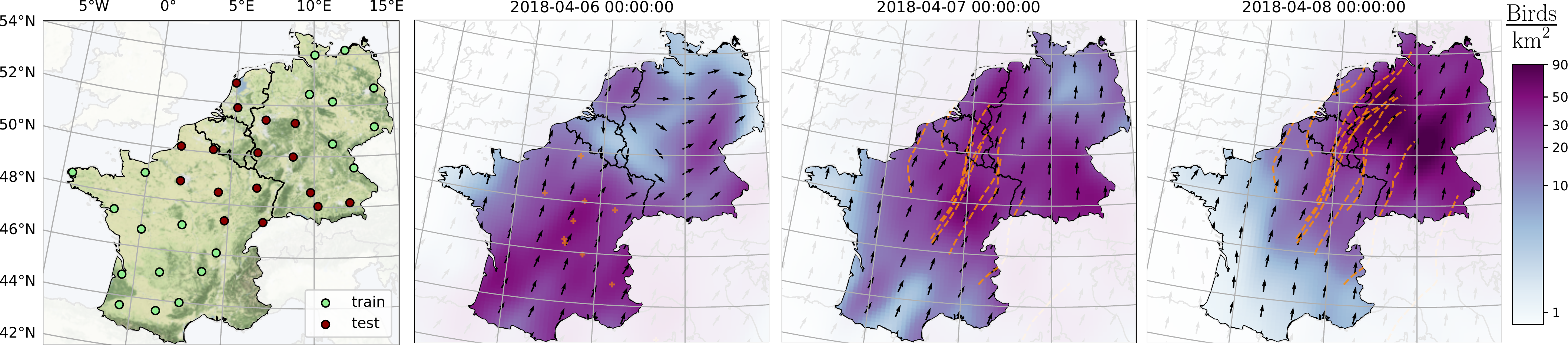}
    \caption{Snapshots of predicted bird density at three consecutive nights within central Europe. 
    The 2D projection was obtained by integrating over altitudes covered by the radars. Orange lines indicate 2D ($xy$) projections of 3D trajectories from $t_0$ to $t$  using randomly sampled departure points.}
    \label{fig:birds_europe}
\end{figure}%
Figure~\ref{fig:birds_europe} shows snapshots of the vertically integrated density and flux predicted by LFlows.
Predictions of the other models are provided in the Appendix~\ref{appendix:subsubsection:birdallmodels}.
Aside from the velocity and density, LFlows readily provide the trajectories (shown in orange).
In practice, experts could compare these with the migration paths taken by individual birds, which are for example obtained from bird-ringing studies. 
We further explore the role of the total mass penalty $w_c$ in Appendix~\ref{appendix:subsubsection:regularization}.

We compare test density errors for each model in Figure \ref{fig:bird_results} (c). 
Methods enforcing the CE result in a lower error than the baseline MLP.
The consistency sMAPE shows the PINN and MLP lead to inconsistent density and velocity. 
LFlows, SLDA and DFNNs again have a sMAPE that is on the order of 1e-4 or lower.
Figure \ref{fig:bird_results} (a) and (b) show the pointwise values of the sMAPE for the PINN and LFlow.

While DFNNs and SLDA are nearly competitive with LFlows in terms of the density MSE, they suffer from high computational costs.
SLDA requires to evaluate a neural network many times to numerically solve the ODE. In addition, SLDA gets slower during training because the dynamics given by the neural network become more stiff.
This is shown by the increasing runtime for each epoch in Figure \ref{fig:bird_results} (e) and is a known limitation of models of the neural ODE family \citep{bilovs2021neural}. 
DFNNs on the other hand result in huge memory requirements due to the required second order derivatives.
Figure \ref{fig:bird_results} (f) shows the peak memory use in terms of VRAM for varying minibatch sizes of the models. In practice, high memory requirements result in small minibatch sizes, which ultimately lead to a slower training and inference pipeline \citep{shallue2018measuring}.
The high peak memory and runtime of PINNs is due to the large amount of collocation points. 


%
\begin{figure}[h]
    \centering
    \includegraphics[width=\linewidth]{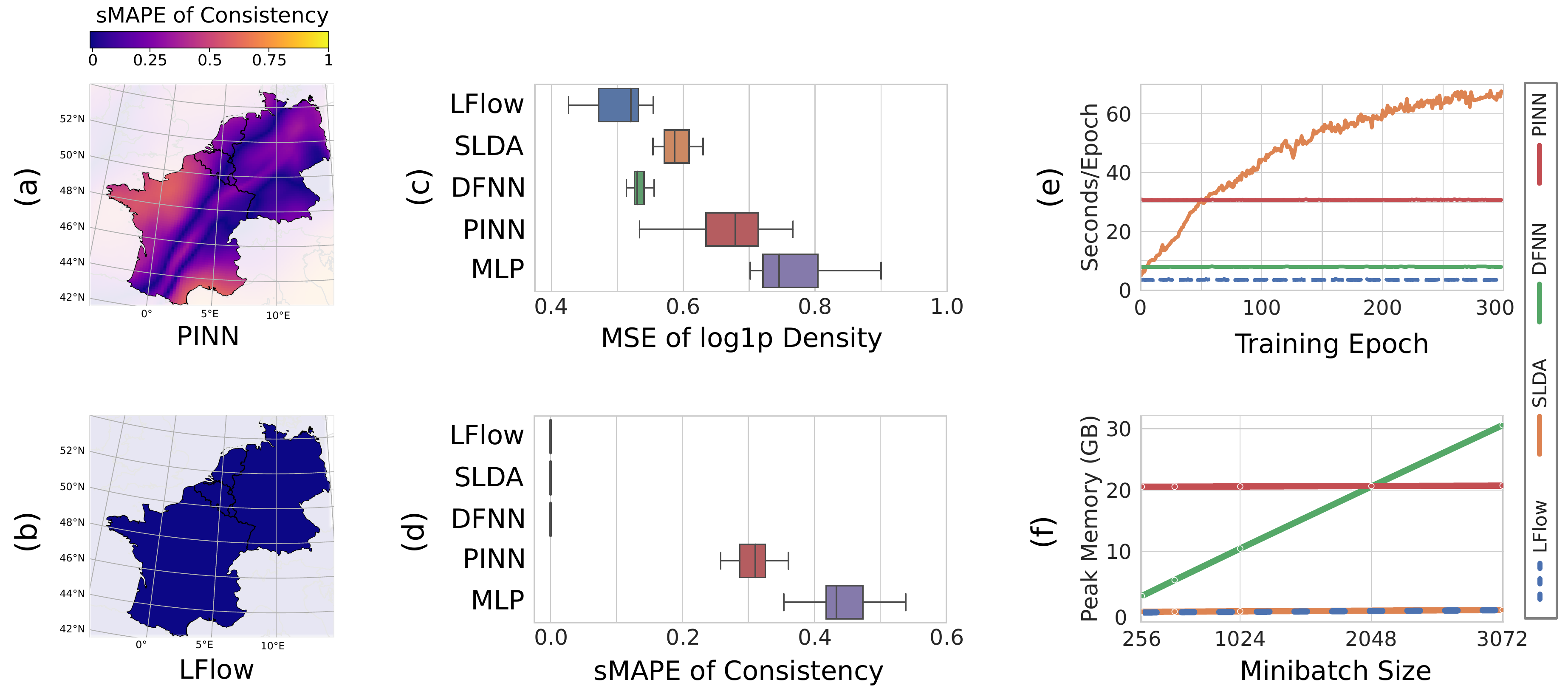}
    \caption{ Results of the bird migration experiment. \textit{(a)} Consistency sMAPE of PINNs and \textit{(b)} LFlows at a fixed time (2018-04-08 00:00). \textit{(c)} Test MSE of the (log1p) density and \textit{(d)} consistency sMAPE evaluated at multiple timesteps. \textit{(e)} Time per training epoch during training and \textit{(f)} peak memory usage during training in GB VRAM for varying minibatch sizes.
             }
    \label{fig:bird_results}
\end{figure}%

\section{Conclusion}
We introduced LFlows for modeling densities and velocities 
that adhere to the continuity equation by construction.
We did so by establishing a link between time-conditioned diffeomorphisms and Lagrangian solution maps for the continuity equation.
The resulting parametrization allows us to elegantly model time evolving densities with a single time-conditioned Normalizing Flow.
Furthermore, we can calculate the velocity without inverting the conditional bijections, allowing the use of expressive bijective layers.
We showed that LFlows can be applied to settings where we have sparse data on both density and velocity, and to settings where we have no data on velocity, but instead enforce additional equations.

In terms of density prediction LFlows outperform all competing models on both synthetic and real experiments.
Different to methods like PINNs, which weakly enforce the continuity equation, LFlows always provide physically consistent predictions.
In addition, LFlows avoid scaling limitations of DFNNs (peak memory usage) and neural adjoint based methods (training time).
For downstream tasks, LFLows directly provide Lagrangian maps without the need for additional numerical solvers.
In dynamical optimal transport settings, LFLows directly provide the analytic expression of the transport map.
When modeling bird migration, the Lagrangian maps provide access to trajectories, which could be compared to migration paths obtained from different data modalities.

\section{Reproducibility Statement}
We provide general information on our implementation of LFlows, DFNNs, SLDA and PINNs Section \ref{section:implementation} and the Appendix \ref{appendix:implementation}.
For each experiment we provide further details in the Appendix \ref{appendix:subsection:synthetic} to \ref{appendix:subsection:birdmigration}, where we state the layers, settings, and computational ressources (GPU) used.
For the synthetic dataset that we introduce,  we provide a detailed description of its generation in the Appendix \ref{appendix:subsection:synthetic}.
For the real-world bird-migration dataset we provided references for accessing the data, as well as information on the preprocessing in the Appendix \ref{appendix:subsection:birdmigration}.
The supplementary material contains code for all experiments and models.
A readme file lists the random seeds used for experiments and plots.
The selected hyperparameters for each experiment are provided.
The code further includes the generation of the synthetic datasets, as well as automated scripts for downloading and preprocessing the bird-migration data.
A cleaned anaconda environment file for reproducing the python environment is provided.


\bibliography{iclr2024_conference}
\bibliographystyle{iclr2024_conference}

\newpage
\appendix
\section{Appendix}
\renewcommand\thefigure{A.\arabic{figure}}
\renewcommand\thetable{A.\arabic{table}}

\subsection{Theoretic Background}
\label{appendix:theoretic_backgrond}
In this section we provide proofs and theorethical background for the method in Section~\ref{section:lagrangian_flow_networks}.
While the underlying theory is well established, we were unable to find a single source that concisely contained all required statements written in an accessible and directly citeable manner.
Section \ref{appendix:lflow:theorems} contains the statements we rely on in Section \ref{section:lagrangian_flow_networks}.
The proofs for these theorems are provided in Sections \ref{app:cont_eq_p1} to \ref{app:cont_eq_p2}

\subsubsection{Time Dependent Bijections and the Continuity Equation}
\label{appendix:theorems}
Theorem \ref{theorem:1} provides us a velocity field given time-dependent bijections. 
Theorem \ref{theorem:2} links the push-forward of the density to the solution of the continuity equation defined by an initial density and the velocity given by Theorem \ref{theorem:1}.

\label{appendix:lflow:theorems}
\begin{theorem}
\label{theorem:1}
Let $0 \leq t_0 < T$ and let $\Omega \subset \R^d$ be a convex open set. 
Let $\X:[t_0,T] \times \domain \to \domain$ be a family of maps such that $\X_t: \domain \to \domain$ is a bijection for any $t\in [t_0,T]$ and $\X_{t_0}(\x)=\x$ for any $\x \in \Omega$. 
Assume that $\X$ and $\X^{-1}$ are $C^\infty ([t_0,T] \times \domain; \domain)$ with globally bounded derivatives.

Then, the velocity field $\vel(t, \x) =  \frac{\partial{\X}_t}{\partial t} \bigl( {\X}_t^{-1}(\x)\bigr) $ is $C^\infty$. In particular, $\vel$ satisfies the assumptions of the Cauchy--Lipschitz Theorem \ref{Cauchy-Lipschitz} and $\X$ is the unique flow map of $\vel$ starting at time $t_0$. Specifically, for any $\x \in \domain$ the curve $t \mapsto \X_t(\x)$ is the unique solution to the Cauchy Problem
\begin{align}
\label{eq:lagr_flow}
&\left\{
\begin{aligned}
\partial_t {\X}_{t\ }(\x)  &= \bs v\left(t, {\X}_{t} (\x)\right) & t \in [t_0,T),
\\
{\X}_{t_0} (\x) &=  \x
\end{aligned}\right.
\end{align}
\end{theorem}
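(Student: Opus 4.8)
The plan is to treat this as a classical smooth-flow statement: the candidate field $\vel$ should inherit both smoothness and a uniform Lipschitz bound directly from the hypotheses on $\X$ and $\X^{-1}$, so that the Cauchy--Lipschitz Theorem~\ref{Cauchy-Lipschitz} applies; the curves $t\mapsto\X_t(\x)$ are then seen to solve the Cauchy problem by a one-line cancellation, and uniqueness is handed to us by Cauchy--Lipschitz.

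First I would establish regularity. Since $\X\in C^\infty([t_0,T]\times\domain;\domain)$, the map $(t,\x)\mapsto \frac{\partial \X_t}{\partial t}(\x)$ is again $C^\infty$, and $\X_t^{-1}$ is $C^\infty$ by hypothesis; hence $\vel(t,\x)=\frac{\partial \X_t}{\partial t}\bigl(\X_t^{-1}(\x)\bigr)$ is $C^\infty$ on $[t_0,T]\times\domain$ as a composition of smooth maps. In particular $\vel$ is continuous in $t$, which is one of the two hypotheses needed.

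Next I would verify the remaining hypothesis, a Lipschitz bound in $\x$ that is uniform in $t$. By the chain rule,
\[
\nabla_{\x}\vel(t,\x)=\bigl[\nabla\tfrac{\partial \X_t}{\partial t}\bigr]\bigl(\X_t^{-1}(\x)\bigr)\,\nabla_{\x}\X_t^{-1}(\x),
\]
and both matrix factors are bounded uniformly in $(t,\x)$ precisely because $\X$ and $\X^{-1}$ have globally bounded derivatives. This is where the convexity of $\domain$ enters: for $\x,\x'\in\domain$ the segment joining them lies in $\domain$, so integrating the bounded Jacobian along it yields $\norm{\vel(t,\x)-\vel(t,\x')}\le L\norm{\x-\x'}$ with a single constant $L$ valid for all $t$. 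I expect this Lipschitz step to be the only genuine obstacle, since without convexity a bounded Jacobian need not produce a global Lipschitz constant on $\domain$, and one must also take care that the constant stays uniform in time.

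Finally, fixing $\x$ and setting $\gamma(t):=\X_t(\x)$, the initial condition $\gamma(t_0)=\x$ is immediate, and differentiating gives $\gamma'(t)=\frac{\partial \X_t}{\partial t}(\x)$, whereas evaluating the definition of $\vel$ along $\gamma$ yields $\vel(t,\gamma(t))=\frac{\partial \X_t}{\partial t}\bigl(\X_t^{-1}(\X_t(\x))\bigr)=\frac{\partial \X_t}{\partial t}(\x)$ by the cancellation $\X_t^{-1}\circ\X_t=\mathrm{id}$. The two expressions agree, so $\gamma$ solves Eq.~\ref{eq:lagr_flow}. Since $\vel$ meets the hypotheses of Theorem~\ref{Cauchy-Lipschitz}, the solution through $(t_0,\x)$ is unique among curves in $\domain$; as $\gamma=\X_t(\x)$ is such a solution and stays in $\domain$ for all $t$, it is the unique one, and hence $\X$ is the flow map of $\vel$ starting at $t_0$.
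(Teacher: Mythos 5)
Your proof is correct and follows essentially the same route as the paper's: smoothness of $\vel$ by composition, the cancellation $\X_t^{-1}\circ\X_t=\mathrm{id}$ to see that $t\mapsto\X_t(\x)$ solves the Cauchy problem, and a spatial Lipschitz bound uniform in time so that the Cauchy--Lipschitz theorem supplies uniqueness of the flow. The only cosmetic difference is that the paper gets the Lipschitz bound by composing the uniformly Lipschitz maps $\partial_t\X_t$ and $\X_t^{-1}$, whereas you derive it from the bounded Jacobians together with convexity of $\domain$ via the mean value inequality --- the same content.
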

\textit{Proof.}
 See Appendix Section \ref{app:vel_from_diff}.

\begin{theorem}
\label{theorem:2}
Let $\domain, T, t_0, \X$ be as in Theorem \ref{theorem:1}. Given an initial density $\rho_{t_0} \in L^1(\Omega)$, we define 
\begin{equation}
    \rho(t, \x) = \rho_{t_0}(\X_{t}^{-1}(\x)) \abs{\det J \X_{t}^{-1} (\x)}. \label{explicit solution}
\end{equation}
Then $\rho(t, \x)$ is a distributional solution to the continuity equation in Eq.~\ref{eq:continuity_eq:euler} according to Definition \ref{CE_distributional}, i.e. the following condition is satisfied for any test function $\phi \in C^\infty_c([t_0, T) \times \domain)$: 
\begin{equation}
    \int_{t_0}^T \int_{\domain} (\partial_t \phi + \vel \cdot \nabla \phi) \rho \, dx \, dt = - \int_{\domain} \rho_{t_0}(\x) \phi(t_0,\x)\, dx. 
\end{equation}
Moreover, if $\rho_{t_0} \in C^\infty(\domain)$, then $\rho \in C^\infty([t_0, T) \times \domain)$ and $\rho$ is a point-wise solution to the continuity equation  Eq.~\ref{eq:continuity_eq:euler}. If we assume in addition that $\rho_{t_0}(\x)>0$ for any $\x \in \domain$, then the same holds for $\rho(t, \x)$ for any $(t, \x) \in [t_0, T) \times \domain$ and $\rho$ satisfies the log-density formula of the continuity equation 
\begin{equation}
    \frac{d}{dt} \log(\rho(t, \X_t(\x))) = - \nabla \cdot \vel (t, \X_t(\x)).  \label{log-density}
\end{equation}
\end{theorem}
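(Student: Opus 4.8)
The plan is to verify the three successive claims (distributional solution, classical solution under smoothness, log-density formula under positivity) in order of increasing regularity, reusing the flow-map structure established in Theorem~\ref{theorem:1}. First I would verify the distributional identity by the change of variables $\x = \X_t(\y)$. Under this substitution the defining formula $\rho(t,\x) = \rho_{t_0}(\X_t^{-1}(\x))\abs{\det J\X_t^{-1}(\x)}$ pulls back so that for any test function $\phi$,
\begin{equation}
\int_{\domain} \psi(t,\x)\,\rho(t,\x)\, dx = \int_{\domain} \psi(t,\X_t(\y))\,\rho_{t_0}(\y)\, dy,
\end{equation}
since the Jacobian factor exactly cancels the one produced by the substitution. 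Applying this with $\psi = \partial_t\phi + \vel\cdot\nabla\phi$ and recognizing, via the chain rule together with the flow equation $\partial_t \X_t(\y) = \vel(t,\X_t(\y))$ from Theorem~\ref{theorem:1}, that $(\partial_t\phi + \vel\cdot\nabla\phi)(t,\X_t(\y)) = \frac{d}{dt}\bigl[\phi(t,\X_t(\y))\bigr]$, reduces the space-time integral to $\int_{\domain}\int_{t_0}^T \frac{d}{dt}\bigl[\phi(t,\X_t(\y))\bigr]\,\rho_{t_0}(\y)\,dt\,dy$. A one-dimensional fundamental theorem of calculus in $t$, using compact support of $\phi$ at $t=T$ and $\X_{t_0}=\mathrm{id}$, then yields the boundary term $-\int_{\domain}\rho_{t_0}(\y)\phi(t_0,\y)\,dy$, which is the claimed weak formulation.

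Next I would upgrade to a pointwise solution when $\rho_{t_0}\in C^\infty$. Since $\X^{-1}$ is $C^\infty$ with bounded derivatives (Theorem~\ref{theorem:1}) and the Jacobian determinant is a smooth nonvanishing function of these derivatives, the explicit formula \eqref{explicit solution} exhibits $\rho$ as a composition and product of $C^\infty$ functions, hence $\rho\in C^\infty([t_0,T)\times\domain)$. A smooth distributional solution is automatically a classical solution (integrate the weak identity by parts and use the fundamental lemma of the calculus of variations), so $\rho$ satisfies Eq.~\ref{eq:continuity_eq:euler} pointwise. For the positivity claim, note that $\rho_{t_0}>0$ everywhere and $\abs{\det J\X_t^{-1}}>0$ (a diffeomorphism has nonvanishing Jacobian), so the product is strictly positive.

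Finally, for the log-density formula I would differentiate $\log\rho(t,\X_t(\x))$ directly. Writing the CE in the advective form $\partial_t\rho + \vel\cdot\nabla\rho + \rho\,(\nabla\cdot\vel) = 0$ and dividing by $\rho>0$ gives $\partial_t\log\rho + \vel\cdot\nabla\log\rho = -\nabla\cdot\vel$; evaluating along the trajectory $t\mapsto\X_t(\x)$ and applying the chain rule together with $\partial_t\X_t(\x)=\vel(t,\X_t(\x))$ collapses the left-hand side into the material derivative $\frac{d}{dt}\log\rho(t,\X_t(\x))$, yielding Eq.~\ref{log-density}. I expect the main obstacle to be the first part: making the change of variables and the interchange of the $t$- and $\x$-integrations fully rigorous at the level of $L^1$ initial data, where one cannot simply differentiate $\rho$ pointwise and must instead justify the Fubini step and the absolute continuity in $t$ using only the boundedness of the flow's derivatives. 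The later steps are essentially bookkeeping once smoothness is in hand.
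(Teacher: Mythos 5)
Your proposal is correct and follows essentially the same route as the paper's proof: the change of variables $\x=\X_t(\y)$ to pull the weak formulation back to the initial density, recognition of $\partial_t\phi+\vel\cdot\nabla\phi$ along trajectories as the total derivative $\frac{d}{dt}\phi(t,\X_t(\y))$ via the flow equation, Fubini plus the fundamental theorem of calculus for the boundary term, the fundamental lemma of the calculus of variations to upgrade to a pointwise solution in the smooth case, and the chain-rule computation for the log-density formula. The concern you flag about justifying Fubini at the $L^1$ level is handled in the paper exactly as you anticipate, using the uniform bounds on the flow and its Jacobian.
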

\textit{Proof.} See Appendix Section \ref{app:cont_eq_p1} and \ref{app:cont_eq_p2}.

\subsubsection{The flow associated to a Lipschitz vector field}
\label{app:cont_eq_p1}

We recall the setting of the classical Cauchy--Lipschitz Theorem. For simplicity, let $\domain \subset \R^d$ be a convex open set. Given $0 \leq t_0 < T$, let $\vel \colon [t_0,T] \times \domain \to \R^d$ be a bounded vector field. We say that $\vel$ is Lipschitz continuous in space uniformly in time if there exists a constant $L>0$ such that  
 \begin{equation}
    \abs{\vel(t, \bs x) - \vel(t, \bs y)} \leq L \abs{\bs x- \bs y} \ \ \ \forall t \in [t_0,T] \ \forall \bs x, \bs y \in \domain. \label{lip in space unif in time}
\end{equation}
Throughout this section, we consider maps $\X: [t_0,T] \times \domain \to \domain$ with the following properties: 
\begin{itemize}
    \item for any $\x \in \domain$ the map $t \mapsto \X_t(\x)$ is $C^{1}([t_0,T])$ with uniform bounds, namely there exists a constant $M>0$ such that 
    \begin{equation}
        \abs{\partial_t \X_t (\x)} \leq M \ \ \ \forall t \in [t_0,T] \ \ \forall \x \in \domain; \label{quantitative Lip time}
    \end{equation}
    \item for any $t \in [t_0,T]$ the map $ \x \mapsto \partial_t \X_t(\x)$ is Lipschitz with uniform bounds, namely there exists a constant $L>0$ such that
    \begin{equation}
        \abs{\partial_t \X_t(\x) - \partial_t \X_t(\bs y)} \leq L \abs{\x- \bs y} \ \ \ \forall t \in [t_0,T] \ \ \forall \x, \bs y \in \domain; \label{lip dt}
    \end{equation}
    \item for any $t \in [t_0,T]$ the map $\x \mapsto \X_t(\x)$ is a bilipischitz transformation of $\domain$ uniformly in time, namely $\X_t$ is a bijection of $\domain$ and there exists a constant $C>0$ such that 
    \begin{equation}
        C^{-1} \abs{\x-\bs y} \leq \abs{\X_t(\x) - \X_t(\bs y)} \leq C \abs{\x-\bs y} \ \ \ \forall \x, \bs y \in \domain \ \ \forall t \in [t_0,T]. \label{quantitative bilip}
    \end{equation}   
\end{itemize}

We state the Cauchy--Lipschitz Theorem in the case of $\R^d$ and for the forward flow. We refer to \cite{Hartman} for an extensive description of the theory of ordinary differential equations, as well as any book in basic differential calculus. 

\begin{theorem} \label{Cauchy-Lipschitz}
Given $T >0$ and $t_0 \in [0,T)$, let $\vel \colon [t_0,T] \times \R^d \to \R^d$ be a bounded vector field that satisfies Eq.~\ref{lip in space unif in time} for some constant $L>0$. For any $\bs x \in \R^d$ there exists a unique trajectory $t \mapsto \X_t(\x)$ solving the Cauchy problem 
\begin{align}
\label{cauchy problem ODE}
&\left\{
\begin{aligned}
\partial_t {\X}_{t\ }(\x)  &= \bs v\left({\X}_{t} (\x), t\right) & t \in [t_0,T),
\\
{\X}_{t_0} (\x) &=  \x
\end{aligned}\right.
\end{align}
in the integral sense. The map $\X :[t_0, T) \times \R^d \to \R^d$ is the flow of $\vel$ starting at time $t_0$ and it satisfies  Eq.~\ref{quantitative Lip time},  Eq.~\ref{lip dt},  Eq.~\ref{quantitative bilip}. 
\end{theorem}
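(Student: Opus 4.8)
\textit{Proof strategy.} The plan is to recast the Cauchy problem in Eq.~\ref{cauchy problem ODE} as the fixed-point equation
\begin{equation}
\X_t(\x) = \x + \int_{t_0}^t \vel\bigl(s, \X_s(\x)\bigr)\,ds \label{eq:intform}
\end{equation}
and to solve it via the Banach fixed-point theorem. Fixing $\x \in \R^d$, I would let $\mathcal{P}$ act on $C([t_0,T];\R^d)$ by $(\mathcal{P}\gamma)(t) = \x + \int_{t_0}^t \vel(s,\gamma(s))\,ds$. Rather than iterating on short subintervals and continuing, I would equip this space with the weighted Bielecki norm $\norm{\gamma}_\lambda = \sup_{t\in[t_0,T]} e^{-\lambda(t-t_0)}\abs{\gamma(t)}$, which is equivalent to the sup-norm and hence complete. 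Using Eq.~\ref{lip in space unif in time}, a direct estimate gives $\norm{\mathcal{P}\gamma - \mathcal{P}\eta}_\lambda \leq (L/\lambda)\norm{\gamma-\eta}_\lambda$, so choosing $\lambda > L$ makes $\mathcal{P}$ a contraction on the whole interval $[t_0,T]$ in one step. Its unique fixed point is the unique integral solution $t\mapsto \X_t(\x)$, which settles existence and uniqueness of the trajectory through each $\x$.

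For the quantitative bounds, I would first note that because $\vel$ is continuous the integrand $s\mapsto\vel(s,\X_s(\x))$ is continuous, so by Eq.~\ref{eq:intform} the curve $t\mapsto\X_t(\x)$ is $C^1$ with $\partial_t\X_t(\x)=\vel(t,\X_t(\x))$; boundedness then yields $\abs{\partial_t\X_t(\x)} \leq \sup\abs{\vel} =: M$, which is exactly Eq.~\ref{quantitative Lip time}. For the bilipschitz estimate Eq.~\ref{quantitative bilip}, I would compare two trajectories through $h(t) = \abs{\X_t(\x) - \X_t(\bs y)}^2$, which is $C^1$. Differentiating and applying Cauchy--Schwarz together with Eq.~\ref{lip in space unif in time} gives $\abs{h'(t)} \leq 2L\,h(t)$, and the two-sided Grönwall inequality yields $e^{-2L(t-t_0)}\abs{\x-\bs y}^2 \leq h(t) \leq e^{2L(t-t_0)}\abs{\x-\bs y}^2$; taking square roots with $C = e^{L(T-t_0)}$ proves Eq.~\ref{quantitative bilip}. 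The Lipschitz bound Eq.~\ref{lip dt} then follows from $\partial_t\X_t(\x) - \partial_t\X_t(\bs y) = \vel(t,\X_t(\x)) - \vel(t,\X_t(\bs y))$ by chaining the Lipschitz constant of $\vel$ with the upper bound just obtained, giving the (generally larger) constant $LC$.

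It remains to show each $\X_t$ is a bijection of $\R^d$. Injectivity is precisely the lower bound in Eq.~\ref{quantitative bilip}. For surjectivity I would apply the existence half of the argument to the time-reversed Cauchy problem: given any target $\bs y$ and time $t$, integrating the ODE backward from $(t,\bs y)$ produces a point $\x$ with $\X_t(\x) = \bs y$, and this backward flow is exactly the inverse of $\X_t$ (furnishing an alternative route to the lower bound). I expect the least routine part to be the lower bilipschitz estimate together with surjectivity: the contraction argument and the upper bound are textbook, whereas the lower bound requires the signed Grönwall estimate on $h$ (or, equivalently, the backward flow) and a small amount of care that the squared-distance function is genuinely differentiable so that Grönwall applies.
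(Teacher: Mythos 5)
Your proof is correct, but note that the paper itself does not prove Theorem \ref{Cauchy-Lipschitz}: it is stated as the classical Cauchy--Lipschitz theorem and the reader is simply referred to \citet{Hartman}, so there is no internal argument to compare against. Your route is the standard textbook one --- rewrite the Cauchy problem as an integral fixed-point equation and contract --- with one tidy refinement: by working in the Bielecki norm $\sup_{t}e^{-\lambda(t-t_0)}\abs{\gamma(t)}$ with $\lambda>L$ you get a contraction on all of $[t_0,T]$ in a single step, avoiding the usual subdivision-and-continuation over intervals of length less than $1/L$. The quantitative conclusions are handled correctly: Eq.~\ref{quantitative Lip time} is immediate from boundedness of $\vel$ once the fixed point is seen to be $C^1$; the two-sided Gr\"onwall estimate on $h(t)=\abs{\X_t(\x)-\X_t(\bs y)}^2$ gives both halves of Eq.~\ref{quantitative bilip} with $C=e^{L(T-t_0)}$; and Eq.~\ref{lip dt} follows by composing the spatial Lipschitz bound on $\vel$ with that upper bilipschitz bound. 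Injectivity from the lower bound plus surjectivity via the time-reversed flow is exactly the argument that Remark \ref{jacobian of flow} implicitly relies on. The only caveat worth flagging is that the theorem as stated assumes only boundedness and the spatial Lipschitz condition Eq.~\ref{lip in space unif in time}, whereas your step ``because $\vel$ is continuous the integrand is continuous'' uses continuity in time as well; this hypothesis is left tacit in the paper too (it is needed for the $C^1$ claim in Eq.~\ref{quantitative Lip time}), and since every vector field to which the theorem is applied in the paper is smooth, nothing breaks.
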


\begin{remark} \label{jacobian of flow}
Under the assumptions of Theorem \ref{Cauchy-Lipschitz}, we point out that the maps $\X_t, \X_t^{-1}$ are Lipschitz continuous for any time. Thus, given a time slice $t \in [t_0, T)$, we infer that $\X_t, \X_t^{-1}$ are differentiable almost everywhere in $\R^d$. Hence, the Jacobian matrices $J \X_t(\x), J \X_t^{-1}(\x)$ are well defined for almost every $\x \in \R^d$ and we have that 
\begin{equation}
    J \X_t( \X_t^{-1}(\x) ) = [J \X_t^{-1}(\x)]^{-1} \ \ \ \text{ for almost every } \x\in \R^d. \nonumber
\end{equation}
Moreover, there exists a constant $M>0$ such that
\begin{equation}
    \abs{ J \X_t(\x) } + \abs{J \X_t^{-1}(\x)} \leq M \ \ \ \text{ for almost every } \x \in \R^d. \nonumber
\end{equation}
Here, $\abs{\cdot}$ is a given matrix norm (recall that all norms are equivalent in finite dimensional vector spaces). 
We point out that the uniqueness part of Theorem \ref{Cauchy-Lipschitz} and the regularity properties of the flow, as well as the existence of the Jacobian matrix, extend to Lipschitz vector field defined on a general convex domain $\domain$, as soon as we assume that the trajectories do not touch the boundary of $\domain$. In this case, we get that the flow map is a bilipschitz transformation of $\domain$ for any time slice. 
\end{remark}

\subsubsection{Velocities from 1-Parameter Groups of Diffeomorphism}
\label{app:vel_from_diff}

Theorem \ref{theorem:1} is a particular case of the following more general result. For simplicity, we consider convex domains. 

\begin{theorem} \label{theorem:1-bis}
Let $0 \leq t_0 < T$, let $\domain \subset \R^d$ be a convex open set and let $\X:[t_0,T] \times \domain \to \domain$ be a family of maps on $\domain$ such that $\X_{t_0}(\x)=\x$ for any $\x \in \domain$. Assume that $\X$ satisfies  Eq.~\ref{quantitative Lip time},  Eq.~\ref{lip dt} and  Eq.~\ref{quantitative bilip}. Then, the velocity field $\vel(t, \x) =  \frac{\partial{\X}_t}{\partial t} \bigl( {\X}_t^{-1}(\x) \bigr) $ satifies the assumptions of the Cauchy--Lipschitz Theorem \ref{Cauchy-Lipschitz} and $\X$ is the unique flow map of $\vel$ starting at time $t_0$. Specifically, for any $\x \in \domain$ the curve $t \mapsto \X_t(\x)$ is the unique solution to the Cauchy problem  Eq.~\ref{eq:lagr_flow}.
\end{theorem}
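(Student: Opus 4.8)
The plan is to verify that the velocity field $\vel(t,\x) = \frac{\partial \X_t}{\partial t}\bigl(\X_t^{-1}(\x)\bigr)$ satisfies the hypotheses of the Cauchy--Lipschitz Theorem \ref{Cauchy-Lipschitz}, and then to observe that the curve $t\mapsto \X_t(\x)$ is, essentially by construction, a solution of the associated Cauchy problem; uniqueness then forces $\X$ to be the flow of $\vel$. The work therefore splits cleanly into a regularity part and an identity part.

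First I would control the spatial inverse. Applying the lower bound in Eq.~\ref{quantitative bilip} to the pair $\X_t^{-1}(\x), \X_t^{-1}(\bs y)$ shows at once that $\X_t^{-1}$ is Lipschitz with constant $C$, uniformly in $t$. With this in hand, boundedness of $\vel$ is immediate from Eq.~\ref{quantitative Lip time}, since $\abs{\vel(t,\x)} = \abs{\partial_t \X_t(\X_t^{-1}(\x))} \leq M$. Spatial Lipschitz continuity then follows by composing Eq.~\ref{lip dt} with the inverse estimate: $\abs{\vel(t,\x) - \vel(t,\bs y)} \leq L\,\abs{\X_t^{-1}(\x) - \X_t^{-1}(\bs y)} \leq LC\,\abs{\x - \bs y}$, uniformly in $t$. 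Thus $\vel$ is bounded by $M$ and satisfies Eq.~\ref{lip in space unif in time} with constant $LC$, so it meets the assumptions of Theorem \ref{Cauchy-Lipschitz}.

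The second part is a one-line identity. Substituting $\X_t(\x)$ for the spatial argument in the definition of $\vel$ and using $\X_t^{-1}(\X_t(\x)) = \x$ gives $\vel(t, \X_t(\x)) = \partial_t \X_t(\x)$. Hence the curve $\gamma(t) := \X_t(\x)$ obeys $\partial_t \gamma(t) = \vel(t, \gamma(t))$ with $\gamma(t_0) = \X_{t_0}(\x) = \x$, i.e.\ it solves the Cauchy problem Eq.~\ref{eq:lagr_flow}. Since $\vel$ satisfies the Cauchy--Lipschitz hypotheses, this solution is unique, and therefore $\X$ is the unique flow map of $\vel$ starting at time $t_0$, as claimed.

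The computations themselves are short, so the only genuinely delicate points are bookkeeping ones: I must ensure every estimate is \emph{uniform in time}, so that the single constants $M$ and $LC$ work on the whole interval $[t_0,T]$, and I must secure enough regularity of $\vel$ in the time variable for the integral formulation of the ODE underlying Theorem \ref{Cauchy-Lipschitz} to apply. I expect this last point to be the main obstacle, and I would dispatch it by noting that continuity of $(t,\x)\mapsto \partial_t \X_t(\x)$ together with the uniform Lipschitz continuity of $\X_t^{-1}$ yields joint continuity of $\vel$, which is more than sufficient.
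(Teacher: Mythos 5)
Your proposal is correct and follows essentially the same route as the paper's proof: deduce boundedness of $\vel$ from Eq.~\ref{quantitative Lip time}, spatial Lipschitz continuity from composing Eq.~\ref{lip dt} with the Lipschitz bound on $\X_t^{-1}$ coming from the lower estimate in Eq.~\ref{quantitative bilip}, then observe that $t \mapsto \X_t(\x)$ solves the Cauchy problem by construction and invoke uniqueness from Theorem~\ref{Cauchy-Lipschitz}. Your version merely makes the constants ($M$ and $LC$) and the uniformity-in-time bookkeeping more explicit than the paper does.
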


The reader might note that Theorem \ref{theorem:1-bis} is the inverse of Theorem \ref{Cauchy-Lipschitz}. Indeed, given a map $\X$ satisfying all the properties of a flow map, it is natural to ask whether we can find a velocity field $\vel$ within the Cauchy--Lipschitz framework whose flow starting at time $t_0$ is $\X$. 

\begin{proof} [Proof of Theorem \ref{theorem:1}] 
Given $(t,\x) \in [t_0,T)\times \domain$, we define
\begin{equation}
    \vel (t,\x) = \lim_{h \to 0} \frac{\X_{t+h}(\X_t^{-1}(\x)) - \X_t(\X_t^{-1}(\x))}{h} = \partial_t \X_t( \X_t^{-1}(\x)). \label{velocity field}
\end{equation}
Since the map $s \mapsto \X_s(\X_t^{-1}(\x))$ is $C^1([t_0,T])$ by assumption for any $\x \in \domain$, the velocity field $\vel$ is well defined. Moreover, by  Eq.~\ref{velocity field}, the curve $t \mapsto \X_t(\x)$ solves the Cauchy problem  Eq.~\ref{cauchy problem ODE} with $\vel$ given by  Eq.~\ref{velocity field}. We study the regularity of $\vel$ to ensure that $\X$ is the \emph{unique} flow associated to $\vel$ starting at time $t_0$. To begin, we remark that $\vel$ is uniformly bounded by  Eq.~\ref{quantitative Lip time}. Moreover, since $\X_t^{-1}, \partial_t \X_t$ are Lipschitz maps for any time slice $t \in [t_0,T]$ by  Eq.~\ref{lip dt} and  Eq.~\ref{quantitative bilip}, we infer that $\vel$ satisfies  Eq.~\ref{lip in space unif in time}. Thus, $\vel$ satisfies the assumptions of Theorem \ref{Cauchy-Lipschitz}. In particular, $\X$ is the unique flow starting at time $t_0$ of the vector field $\vel$. 
\end{proof}

\newcommand{\bvec}{\vel}

\subsubsection{The continuity equation}
\label{app:cont_eq_p2}

Let $\domain \subset \R^d$ be an open set, let $T>0$ and $t_0 \in [0,T)$. Given a vector field $\vel: [t_0,T]\times \domain \to \R^d$, we shall consider the continuity equation  Eq.~\ref{eq:continuity_eq:euler} on $(t_0, T)\times \domain$ with initial condition $\rho_{t_0}$. To deal with irregular vector fields and densities, we consider solutions to  Eq.~\ref{eq:continuity_eq:euler} in the sense of distributions. We refer to \citet{Ambrosio-Crippa} for some basic and advanced results on the theory of continuity equation.

\begin{definition} \label{CE_distributional}
Let $\domain \subset \R^d$ be an open set and $0 \leq t_0 < T$. Let $\vel \in L^\infty([t_0,T]\times \domain; \R^d)$ be a vector field and let $\rho_{t_0} \in L^1(\domain)$. We say that $\rho \in L^\infty([t_0,T]; L^1(\domain))$ is a distributional solution to  Eq.~\ref{eq:continuity_eq:euler} if the following condition is satisfied for any test function $\phi \in C^\infty_c([t_0,T) \times \domain )$: 
\begin{equation}
    \int_{t_0}^T \int_{\domain} (\partial_t \phi + \vel \cdot \nabla \phi) \rho \, dx \, dt = - \int_{\domain} \rho_{t_0}(x) \phi(t_0,x)\, dx. \label{CE_distributional form}
\end{equation}
\end{definition}

\begin{remark} \label{distrubutional vs classical}
We point out that Definition \ref{CE_distributional} is well posed without differentiability assumptions on $\bvec,\rho$. However, if $\bvec, \rho$ are $C^1$ functions in time and space and $\rho_{t_0}$ is a continuous function, after integrating by parts the left hand side of  Eq.~\ref{CE_distributional form}, for any test function $\phi \in C^\infty_c([t_0, T) \times \domain)$ we obtain that 
\begin{equation} 
    \int_{t_0}^T \int_{\domain} (\partial_t \rho + \nabla \cdot (\bvec \rho)) \phi \, dx \, dt + \int_{\Omega} \rho(t_0, \x) \phi(t_0,x) \, dx = \int_{\domain} \rho_{t_0}( \x) \phi(t_0, \x)\, dx.  \nonumber
\end{equation}
Since $\partial_t \rho + \nabla \cdot(\bvec\rho)$ is a continuous function, by the so-called Fundamental Lemma of Calculus of Variations (see \citet{Brezis:Funct_An}, for instance) we infer that $\partial_t \rho + \nabla \cdot (\vel \rho) = 0$ for any $(t,\x) \in (t_0,T) \times \domain$ and $\rho_{t_0}(\x) = \rho(t_0,\x)$ for any $\x \in \domain$, thus recovering the pointwise formulation of  Eq.~\ref{eq:continuity_eq:euler}. 
\end{remark}

\paragraph{Solving the Continuity Equation.}

The proof of the first part of Theorem \ref{theorem:2} is a corollary of the following general statement. 

\begin{theorem} \label{theorem:2-bis}
Let $\domain \subset \R^d$ be an open set and let $0 \leq t_0 < T$. Let $\vel :[t_0,T] \times \domain \to \R^d$ be a globally bounded velocity field that satisfies  Eq.~\ref{lip in space unif in time}. Assume that the flow of $\vel$ starting at time $t_0$, denoted by $\X$, is well defined in $[t_0,T] \times \domain$ and that $\X_t : \domain \to \domain$ is a bilipschitz transformation of $\Omega$. Letting $\rho(\x,t)$ be defined by  Eq.~\ref{explicit solution}, then $\rho$ is a distributional solution to the continuity equation  Eq.~\ref{eq:continuity_eq:euler} according to Definition \ref{CE_distributional}. 
\end{theorem}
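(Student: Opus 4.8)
The plan is to verify the distributional identity in Eq.~\ref{CE_distributional form} directly, by transporting the spatial integral against $\rho$ into Lagrangian coordinates; after this substitution the integrand becomes a total time derivative along trajectories, and the left-hand side collapses under the fundamental theorem of calculus. Fix a test function $\phi \in C^\infty_c([t_0,T)\times\domain)$. The first step is the change-of-variables identity: for fixed $t$ and any bounded $\psi$, the substitution $\x = \X_t(\vy)$ gives
\begin{equation}
\int_\domain \psi(\x)\,\rho(t,\x)\,d\x = \int_\domain \psi(\X_t(\vy))\,\rho_{t_0}(\vy)\,d\vy, \nonumber
\end{equation}
which follows from the definition of $\rho$ in Eq.~\ref{explicit solution} together with the almost-everywhere Jacobian relation $J\X_t(\X_t\inv(\x)) = [J\X_t\inv(\x)]\inv$ recorded in Remark~\ref{jacobian of flow}. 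Applying this with $\psi = \partial_t\phi(t,\cdot) + \vel(t,\cdot)\cdot\nabla\phi(t,\cdot)$ and integrating in $t$, the left-hand side of Eq.~\ref{CE_distributional form} becomes
\begin{equation}
\int_{t_0}^T\!\!\int_\domain \Big(\partial_t\phi(t,\X_t(\vy)) + \vel(t,\X_t(\vy))\cdot\nabla\phi(t,\X_t(\vy))\Big)\rho_{t_0}(\vy)\,d\vy\,dt. \nonumber
\end{equation}

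The key observation is that, for fixed $\vy$, the curve $t\mapsto\X_t(\vy)$ is $C^1$ by Eq.~\ref{quantitative Lip time} and solves the flow ODE $\partial_t\X_t(\vy) = \vel(t,\X_t(\vy))$ by Theorem~\ref{Cauchy-Lipschitz}, so the chain rule identifies the integrand above with the total derivative $\frac{d}{dt}\big[\phi(t,\X_t(\vy))\big]$. I would then invoke Fubini to integrate in $t$ first, justified because the integrand is bounded (the derivatives of $\phi$ and $\vel$ are bounded) by a constant multiple of $\rho_{t_0}\in L^1(\domain)$ over the bounded interval $[t_0,T]$. The inner integral evaluates by the fundamental theorem of calculus to $\phi(T,\X_T(\vy)) - \phi(t_0,\X_{t_0}(\vy))$. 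Since $\phi$ has compact support in $[t_0,T)\times\domain$ the term at $T$ vanishes, and since $\X_{t_0}=\mathrm{id}$ the remaining term is $-\phi(t_0,\vy)$, yielding exactly $-\int_\domain\rho_{t_0}(\vy)\phi(t_0,\vy)\,d\vy$, the right-hand side of Eq.~\ref{CE_distributional form}.

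The main obstacle I anticipate is the rigorous justification of the change-of-variables step, since $\X_t$ is only assumed to be a bilipschitz transformation rather than a $C^1$ diffeomorphism: its differentiability holds merely almost everywhere by Rademacher's theorem, so the substitution must be carried out via the area formula for Lipschitz maps, using the a.e. Jacobian identities and the uniform bounds of Remark~\ref{jacobian of flow} to control the integrand. Everything else — the chain rule in $t$, which needs only smoothness of $\phi$ and the $C^1$ time-dependence of $\X_t(\vy)$, together with Fubini and the fundamental theorem of calculus — is routine once this measure-theoretic change of variables is in place. For the relevant background on continuity equations with Lipschitz vector fields I would cite \citet{Ambrosio-Crippa}.
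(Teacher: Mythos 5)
Your proposal is correct and follows essentially the same route as the paper's proof: a Lagrangian change of variables via the area formula, identification of the integrand with $\tfrac{d}{dt}\phi(t,\X_t(\vy))$ through the flow ODE and chain rule, then Fubini and the fundamental theorem of calculus, using $\phi(T,\cdot)\equiv 0$ and $\X_{t_0}=\mathrm{id}$. Your explicit attention to the almost-everywhere Jacobian identities and the bilipschitz area formula is a slightly more careful rendering of a step the paper passes over quickly, but the argument is the same.
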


We remark that, under the assumptions of Theorem \ref{theorem:2}, the flow map $\X$ is given and we build velocity field $\vel$ that has $\X$ as a unique flow map. Hence, by construction, $\domain$ is invariant under $\X_t$ for any $t \in [t_0, T]$. Thus, the assumptions of Theorem \ref{theorem:2-bis} are satisfied. 

\begin{proof} [Proof of Theorem \ref{theorem:2-bis}]
By Theorem \ref{Cauchy-Lipschitz}, the flow map $\X$ starting at time $t_0$ associated to $\vel$ is well defined. Hence, defining $\rho$ by  Eq.~\ref{explicit solution}, for any $t \in [t_0,T]$ we have that $\rho(t, \cdot)$ is defined almost everywhere in $\R^d$. We check that $\rho$ is a distributional solution to  Eq.~\ref{eq:continuity_eq:euler} according to Definition \ref{CE_distributional}. To begin, we check that $\rho \in L^\infty([t_0,T]; L^1(\R^d))$. 
Indeed, after the change of variables $\X_{t}^{-1}(\x) = \bs y$, using the Area formula with $dy = \abs{\det J \X_{t}^{-1}(\x)} dx$, we have that 
\begin{equation*}
    \int_{\domain} \abs{\rho(t, \x)}\, dx = \int_{\domain} \abs{\rho_{t_0}(\X_{t}^{-1}(\x))} \abs{\det J \X_{t}^{-1}(\x)} \, dx = \int_{\domain} \abs{\rho_{t_0}(\bs y)}\, dy. 
\end{equation*}
Therefore, the total mass is preserved along the time evolution. Then, fix a test function $\phi \in C^\infty_c([t_0,T) \times \domain)$ and, performing again the change of variables $\bs y = \X_t^{-1}(\x)$, we have that 
\begin{align*}
    \int_{t_0}^T \int_{\domain} & [\partial_t \phi + \vel \cdot \nabla \phi] \rho \, d x \, dt 
    \\ & = \int_{t_0}^T \int_{\domain} [\partial_t \phi(t,\x) + \bvec(t,\x) \cdot \nabla(t,\x) \phi] \rho_{t_0}(\X_{t}^{-1}(\x)) \abs{\det J \X_{t}^{-1}(\x)} \, dx \, dt 
    \\ & = \int_{t_0}^T \int_{\domain} [\partial_t \phi(t, \X_{t}(\bs y)) + \bvec(t, \X_{t}(\bs y)) \cdot \nabla \phi(t, \X_{t}(\bs y)) ] \rho_{t_0}(\bs y)\, dy \, dt. 
\end{align*}
Recalling that $\X_{t}$ is the flow map generated by the velocity field $\bvec$ starting at time $t_0$, the latter is equal to 
\begin{align*}
    \int_{t_0}^T \int_{\domain} & [\partial_t \phi(t, \X_{t}(\bs y)) + \partial_t \X_{t}(\bs y) \cdot \nabla \phi(t, \X_{t}(\bs y))] \rho_{t_0}(\bs y)\, dy \, dt 
    \\ & = \int_{\domain} \rho_{t_0}(\bs y) \int_{t_0}^T \frac{d}{dt} \phi(t, \X_{t}(\bs y)) \, dt \, dy,
\end{align*}
after using Fubini's Theorem and the chain rule for the derivatives. Thus, by the Fundamental Theorem of Calculus, we have that 
\begin{align*}
    \int_{\domain} \rho_{t_0}(\bs y) \int_{t_0}^T \frac{d}{dt} \phi(t, \X_{t}(\bs y))\, dt \, dy & = \int_{\domain} \rho_{t_0}(\bs y) [\phi(T, \X_{T}(\bs y)) - \phi(0, \X_{t_0}(\bs y))]\, dy 
    \\ & = -\int_{\domain} \phi(t_0,\bs y) \rho_{t_0}(\bs y) \, dy,
\end{align*}
since $\phi(T, \cdot) \equiv 0$ and $\X_{t_0}$ is the identity map. 
\end{proof}

Finally, we are able to conclude the proof of Theorem \ref{theorem:2}. 

\begin{proof} [Conclusion of the proof of Theorem \ref{theorem:2}]
We discussed the fact that $\rho$ is a pointwise solution to the continuity equation  Eq.~\ref{eq:continuity_eq:euler} in Remark \ref{distrubutional vs classical}. Indeed, by the explicit formula  Eq.~\ref{explicit solution} it is clear that $\rho$ is $C^\infty([t_0, T) \times \domain)$ and that $\rho$ is nonnegative whenever $\rho_{t_0}$ is nonnegative. Thus, we check that  Eq.~\ref{log-density} is satisfied. Indeed, by the chain rule we have that 
\begin{align*}
    \frac{d}{dt} \log(\rho(t, \X_t(\x))) & = \frac{1}{\rho(t, \X_t(\x))} \left[ \partial_t \rho (t, \X_t(\x)) + \nabla \rho(t, \X_t(\x)) \cdot \frac{d}{dt} \X_t(\x) \right]
    \\ & = \frac{1}{\rho(t, \X_t(\x))} \left[ \partial_t \rho (t, \X_t(\x)) + \nabla \rho(t, \X_t(\x)) \cdot \bvec(t, \X_t(\x)) \right]
    \\ & = \frac{1}{\rho(t, \X_t(\x))} \bigg[ \partial_t \rho(t, \X_t(\x)) + \nabla \cdot (v(t, \X_t(\x)) \rho(t, \X_t(\x))) 
    \\ & \hspace{1 cm} - (\nabla \cdot \bvec (t, \X_t(\x)) )  \rho(t, \X_t(\x)) \bigg]
    \\ & = - \nabla \cdot \bvec(t, \X_t(\x)), 
\end{align*}
since $\rho$ satisfies the continuity equation at any point $(t,\x) \in (t_0,T)\times \domain$. 
\end{proof}


\subsection{Calculating for the Density and Velocity}
In this subsection we explicitly provide the steps to get to Eq.~\ref{eq:lflow_density} and Eq.~\ref{eq:velocity:forward}.
\subsubsection{Calculating the Density}
\label{appendix:calc_density}
We now report explicitly the steps to get to Eq.~\ref{eq:lflow_density}.

Let $f$ and $g$ be diffeomorphisms on $\mathbb{R}^d$, and $A$ and $B$ positive definite matrices. 
We denote with $J(f)(g(x))$ the Jacobian of  $f$ evaluated at the point $g(x)$.
In the following part we make use of identities that follow from the chain rule, the inverse function theorem, and properties of the determinant:
\begin{align}
    J(f\circ g)(x) &= J(f)(g(x))\ J(g)(x),
    \\
    \left|\det J(f)(x)\right| &= \left|\det J(f\inv)(f(x))\right|\inv,
    \\
    \left|\det(A\cdot B)\right| &= \left|\det(A)\right|\cdot\left|\det(B)\right|.
\end{align}

\begin{align}
    \hat{\rho}(t, \x) &= \hat{\rho}_{t_0}(\hat{\X}_t\inv(\x)) |\det J(\hat{\X}_t\inv)(\x)| 
    \\
    &= \hat{\rho}_{\text{base}}\Big((\Phi_{t_0} \circ \hat{\X}_t\inv) (\x)\Big) 
    \left|\det J(\Phi_{t_0})(\hat{\X}_t\inv(\x))\right| \left|\det J(\hat{\X}_{t}\inv)(\x)\right|
\end{align}
\begin{align}
    \hat{\rho}_{\text{base}}\Big((\Phi_{t_0} \circ \hat{\X}_t\inv) (\x)\Big) 
    &= \hat{\rho}_{\text{base}}\Big((\Phi_{t_0}\circ \Phi_{t_0}\inv \circ \Phi_{t}) (\x)\Big) = \hat{\rho}_{\text{base}}(\Phi_{t}(\x))
    \\ \nonumber \\
     \left|\det J(\Phi_{t_0})(\hat{\X}_t\inv(\x))\right| &=  \left|\det J(\Phi_{t_0})\Big( (\Phi_{t_0}\inv \circ \Phi_{t}) (\x)\Big)\right|
     \\
     &= \left|\det J(\Phi_{t_0}\inv)\Big((\Phi_{t_0} \circ \Phi_{t_0}\inv \circ \Phi_{t}) (\x)\Big)\right|\inv
     \\
     &= \left|\det J(\Phi_{t_0}\inv)(\Phi_{t}(\x))\right|\inv
     \\ \nonumber
     \\
      \left|\det J(\hat{\X}_{t}\inv)(\x)\right| &=  \left|\det J( \Phi_{t_0}\inv \circ \Phi_{t})(\x)\right|
      \\
       &= \left|\det J( \Phi_{t_0}\inv )(\Phi_{t}(\x)) \right| \cdot \left|\det J(\Phi_{t})(\x) \right| 
\end{align}

Combing the three terms we get:
\begin{align}
        \hat{\rho}(t, \x) &= \hat{\rho}_{t_0}(\hat{\X}_t\inv(\x)) |\det J(\hat{\X}_t\inv)(x)| 
        \\
        &= \hat{\rho}_{\text{base}}(\Phi_{t}(\x)) \underbrace{\left|\det J(\Phi_{t_0}\inv)(\Phi_{t}(\x))\right|\inv \cdot \left|\det J( \Phi_{t_0}\inv )(\Phi_{t}(\x)) \right|}_{=1} \cdot \left|\det J(\Phi_{t})(\x) \right|
        \\
        &= 
        \hat{\rho}_{\text{base}}(\Phi_{t}(\x)) \left|\det J(\Phi_{t})(\x) \right| 
\end{align}
\subsubsection{Calculating the Velocity without Inverting the Flow.}
\label{appendix:calc_velocity}
We now report explicitly the steps to get to Eq. \ref{eq:velocity:forward}.
Firstly, we show that the velocity can be expressed in terms of the flow bijection $\Phi_t$ and its inverse $\Phi_t\inv$.

\begin{align}
    \hat{\vel}(t, \x) 
          &=  \frac{\partial{\hat{\X}}_t}{\partial t} \left( \hat{\X}_t^{-1}(\x)\right) 
          \\
          &= \frac{\partial{\left(\Phi_t\inv \circ \Phi_{t_0}\right)}}{\partial t} \left( (\Phi_{t_0}\inv \circ \Phi_{t})(\x)\right) 
          \\
          &= \frac{\partial{\left(\Phi_t\inv \right)}}{\partial t} \left( (\underbrace{\Phi_{t_0}\circ \Phi_{t_0}\inv }_{=1} \circ \Phi_{t})(\x)\right) 
          \\
          &= \frac{\partial  \Phi^{-1}_{t}}{\partial t}\Big( {\Phi_{t}(\x)} \Big) 
\end{align}
In a second step, the velocity can be written without the need for explicit inversion of the bijective layer $\Phi_t$, allowing for efficient computation in practice.


Let $\Phi_t$ and $\Phi_t\inv$ be the maps from $\x_t$ to $\bs z$ and vice versa respectively.
\begin{align}
    \Phi_t(\x_t) = \bs z, \quad
    \Phi_t\inv(\bs z) = \x_t
\end{align}
Clearly, $\Phi_t \left( \Phi_t^{-1}(\bs z) \right) = \bs z$ and $\bs z$ does not depend on time, i.e. $\tfrac{d}{dt}\bs z = 0$.
We can now explicitly compute the total derivative and find a formulation for the velocity that requires inverting a Jacobian instead of inverting the map $\Phi_t$.

\begin{equation}
    \frac{d}{dt} \Big( \Phi_t \left( \Phi_t^{-1}(\bs z) \right) \Big) =\frac{d}{dt} \bs z  = {0}
\end{equation}

\begin{equation}
    \Rightarrow \frac{\partial \Phi_t}{\partial t} \big( \overbrace{\Phi_t^{-1}(\bs{z})}^{=\x_t} \big) 
    + \overbrace{\frac{\partial \Phi_t}{\partial \x_t} 
        \big(    
            \Phi_t^{-1}(\bs{z}) 
        \big)}^{=J{\Phi_t}(\bs{x_t})}
    \frac{\partial \Phi_t\inv}{\partial t}\big(\overbrace{\phantom{I} \bs{z} \phantom{I}}^{\Phi_t(\x_t)}\big) = 0 
\end{equation}
\begin{equation}
    \Rightarrow \frac{\partial \Phi_t\inv}{\partial t}\big(\Phi_t(\x_t)\big)  =
    -[J{\Phi_t}(\x_t)]^{-1} \ \frac{\partial \Phi_t}{\partial t} ( \x_t )
\end{equation}

\subsection{Implementation}
\label{appendix:implementation}
This section provides high-level descriptions of the implementations of the models used in all experiments. For details on experiment-specific implementations, we refer to the corresponding sections \ref{appendix:subsection:synthetic} to \ref{appendix:subsection:birdmigration}. 
\paragraph{LFlows.}
Our code for LFlows is based on the \textit{nflows} library for bijective neural networks \citep{nflows}.
We extended  \textit{nflows} by providing a range of additional (conditional) transformations, such as invertible densenets, in addition to the modifications required for the LFlows.
The code is provided as supplementary material.

\label{appendix:subsection:hypernetworks}
\begin{figure}[h]
    \centering
    \includegraphics[width=.8\linewidth]{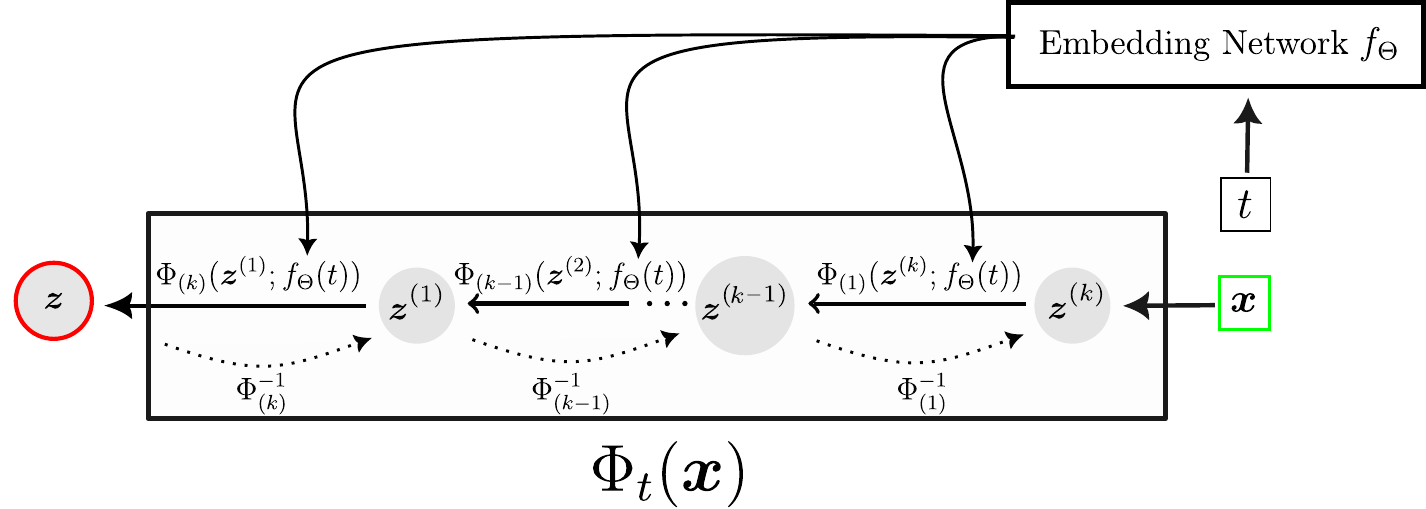}
    \caption{General Architecture used for the conditional bijective layers based on embedding networks. Dotted lines indicate the inverse direction).}
    \label{fig:architecture}
\end{figure}

\paragraph{SLDA.}
We represent the density $\rho_{t_0}$ at the departure time $t_0$ with a bilinear interpolation of a learnable equi-distant mesh in 2D or 3D.
We parameterize the velocity with a neural network with smooth activation functions.
In 2D settings, we evaluate the divergence in Eq.~\ref{eq:CNF} exactly via autograd, whereas we stochasticly estimate it in 3D during training \citep{nflows:grathwohl2018ffjord}.
To solve multiple ODEs with differing start times, in parallel we leverage the rescaling trick discussed in Appendix F of \citet{chen2020neural}.
In all settings we use a Dormand-Prince solver of order 5 with absolute and relative tolerance of 1e-5.
The adjoint is computed with the \textit{torchdiffeq} library \citep{torchdiffeq}.

\paragraph{DFNNs}
We rely on the vector-based parameterization of DFNNs given in Section 7.1 of \citet{physml:richter2022neural}, which ensures non-negative densities.

\paragraph{PINNs.}
We implement standard PINNs without additional resampling schemes or loss terms, and use either ReLu (Section \ref{subsection:synthetic}) or sinusoidal activations \citep{sitzmann2020implicit} (Section \ref{subsection:bird_migr}).
The collocation points are resampled each training iteration with quasi-random Sobol sequences.




\subsection{Experiment: Simulated Fluid Flow}
\label{appendix:subsection:synthetic}
\paragraph{Data Generation.}
We generate the data by defining a Lagrangian flow map for an initial unnormalized gaussian mixture density.
The initial density of the simulated problem is based on a mixture of 4 independent Gaussians arranged around the origin with varying radii and a standard deviation of $0.1$.
We defined the density to be restricted to $\Omega = [-4, 4]^d$ with $(\rho\vel) |_{\boundary}=0$:
\begin{equation}
    \rho_0(\x)  = \begin{cases}
        \sum_{i=1}^{4} \frac{1}{4} \mathcal{N}(\bs{\mu_i}, 0.1 \bs I)  & \text{if } \x \in (-4, 4)^d
        \\
        0 & \text{else}
    \end{cases}
\end{equation}
The changes in density on the $xy$ axes are simulated by directly parameterizing the Lagrangian solution map $\X_t(\x_0): (-4, 4)^2 \mapsto (-4, 4)^2$ for $t\in [0, 1.2]$:
\begin{equation}
    \X_t(\x_0) = 4 \cdot \tanh \Bigg( \frac{(0.5 t + 1)}{4}  A_{rot}(t)A_{scale}(t) \Big( 4 \cdot \mathrm{atanh}(0.25 \x_0) + \text{shift}(t)  \Big) \Bigg)
\end{equation}
with
\begin{align}
     A_{rot}(t) &= 
     \begin{bmatrix}
         \cos(2 \pi t) & -\sin(2 \pi t) \\
         \sin(2 \pi t) & \phantom{-}\cos (2 \pi t)
     \end{bmatrix} 
     \\
     A_{scale}(t) &= 
     \begin{bmatrix}
         1+ 0.1 t & 0\\
         0 & 1 + 0.1 t
     \end{bmatrix} 
     \\
     \mathrm{shift}(t) &= \sin(\pi t) \begin{bmatrix} \phantom{-}0.6 \\ -0.6 \end{bmatrix} 
\end{align}
In 3D, the $z$ component of $\X_t$ is always an identity map, limiting the dynamics to the $xy$ axis.

The density and velocity are then given by:
\begin{align}
    \rho(t, \x) &= 
        \rho_{0}\Big({\bs X}_{t}^{-1}(\x)\Big)\abs{\det J {\X}_{t}^{-1} (\x)}
        \\
    \vel(t, \x) &= \frac{\partial{\X}_t}{\partial t} ( {\X}_t^{-1}(\x)) 
\end{align}
where all involved derivatives are computed using automatic differentiation.
Observations are available at 21 equidistant timesteps in the range $[0, 1]$.
The test data set covers the time range $[0, 1.2]$.
To simulate noisy measurements, additional Gaussian noise is added to the observed velocities and log densities during training.

\subsubsection{Training and Architecture details}

\paragraph{Hyperparameter Optimization.}
We optimize each model based on the explained variance ($R^2$) of the density on validation data.
Firstly, we manually performed a general architecture selection.
Subsequently, we tuned parameters such as the number of layers, units, learning rate, loss- and regularization weights with the black-box optimization framework Optuna \footnote{\url{https://optuna.org/}}\citep{akiba2019optuna}, using the default Tree-structured Parzen Estimator as sampler.
We trained the LFlows and PINNs on a minibatch size of $16384$ and the SLDA on a minibatch size of $4096$.
As the 2nd order derivatives of the DFNNs require a lot of GPU memory, DFNNs were limited to a minibatch size of $2048$.
For the optimized hyperparameters of each model we refer to the provided code in the supplementary material.

\paragraph{LFlows.}
For the initial layer we first rescale the domain and then use a $atanh$ bijection for restricting the domain to $[-4, 4]^d$.
For the remaining layers we used blocks consisting of invertible Dense Nets (i-DenseNet) \citep{perugachi2021invertible} followed SVD layer conditioned on the embedding.
For the i-DenseNet we rely on sinusoidal activations $g(x) = \sin(15*x)/15$. 
Each i-DenseNet has a depth of 3 and before each block we use an Activation Normalization layer. 
We enforce the Lipschitz constant of the i-DenseNet to be $0.97$. 

\paragraph{DFNNs.}
The Divergence-Free Neural Networks do not directly provide access to the velocity $\vel$, but only to the flux $\bs F=(\rho \vel)$.
Hence, calculating the velocity $\vel={\bs F}/{\rho}$ in low-density regions leads to numerical issues.
To avoid this, we train DFNNs directly on the flux instead of the velocity.
Furthermore, we require non-negative densities, so we use the parameterization with subharmonic functions discussed in Section 7.1 of \citet{physml:richter2022neural}.
As the predicted densities can still be zero,  we train the DFNNs on the MSE of the densities (instead of log densities).

\paragraph{PINNs.}
To facilitate training of the PINN, we use sinusoidal activation functions as presented by \citep{sitzmann2020implicit}.
We use a frequency multiplier of $\omega_0=12$ in the first layer.
Collocation points are sampled within the full domain, uniformly distributed in $[0, 1.2] \times \Omega$.
Instead of a purely random sampler, we rely on quasi-random low-discrepancy samples obtained via Sobol Sequences.
In each minibatch, $2^{16}$ collocation points are sampled.
The minimized PDE loss is $L(t, \x) = ||\partial_t \rho(t, \x) + \nabla \cdot (\rho(t, \x)  \vel(t, \x))||^2$, averaged over the collocation points.

\paragraph{SLDA}
Relative and absolute tolerances of the solver during hyperparameter optimization were $10^{-3}$ and for the final run with the tuned hyperparameters $10^{-5}$. 
Lower tolerances during hyperparameter search were not feasible, as they significantly increased the runtime for the problem at hand.
For stable dynamics, the hypernetworks provided by the code of \citet{nflows:grathwohl2018ffjord} were necessary.
These hyper networks are conditioned on time and provide a network taking the space coordinates as input, i.e. $\vel_{\Theta(t)}(\x)$ with $\Theta$ being the hyper network.
Using instead fully connected layers (that still fulfill the smoothness requirements) led in our experience to difficult dynamics for the adaptive ODE solvers.

\paragraph{Boundary Conditions.}
For PINNs, SLDA, and DFNNs the boundary condition $\rho(\x) \bs v(\x)|_{\boundary} = \bs 0$ is enforced via an additional penalty on points sampled at the boundary.
For LFlows, the boundary was enforced via a bijection to $\Omega \setminus \boundary$.

\paragraph{Numerical Evaluation of Physical Consistency}
We compare the predicted density $\hat{\rho}(t,\x)$ with the numerical solution of the initial value problem uniquely defined by $\hat{\rho}(t_0,\x)$, $\hat{\vel}(t, \x)$ and Eq.~\ref{eq:CNF}.
We obtain the numerical solution with an 8th order Dormand-Prince ODE solver with an absolute and relative tolerance of 1e-5.
We set $t_0=0$.
For the locations where we evaluate the sMAPE, we consider 10 equidistant timesteps in $[0.1, 1.2]$.
For each timestep, we first randomly sample locations in $(4,4)^{d}$, and then we randomly subsample 2500 location with groundtruth densities larger than a threshold of $0.1$ in 2D or $0.01$ in 3D.
With this procedure we avoid regions with near-zero density.


\paragraph{Total Mass Regularization.}
We penalize the total mass of the system for all methods except PINNs.
For the LFlows and SLDA, we penalize the learnable normalization constant.
For the DFNNs, no equivalent to the normalization constant is available.
We instead introduce the penalty at points sampled on the domain $[0,1.2] \times \Omega$.
For PINNs additional regularization is not necessary.
This is due to the side effect of the PDE loss being numerically small for small densities,
leading to an automatic built-in penalty for large total mass.

\paragraph{Computational Resources.}
Each individual experiment for the synthetic data was run on individual NVIDIA TITAN X GPUs (12GB VRAM), using 20 CPU cores and 20GB RAM.
To speed up hyperparameter tuning, up to 8 experiments were run in parallel using a SLURM-based compute cluster.

\subsection{Experiment: Dynamical Optimal Transport}
\label{appendix:subsection:ot}
In Figure \ref{appendix:ot:gauss_circles} and Figure \ref{appendix:ot:pinwheel_gauss} interpolated densities are shown for the 8Gaussians$\leftrightarrow$Circles dataset and the  Pinwheel$\leftrightarrow$8Gaussians dataset.

\begin{figure}[ht]
\label{appendix:ot:gauss_circles}
    \centering
    \includegraphics[width=0.6\linewidth]{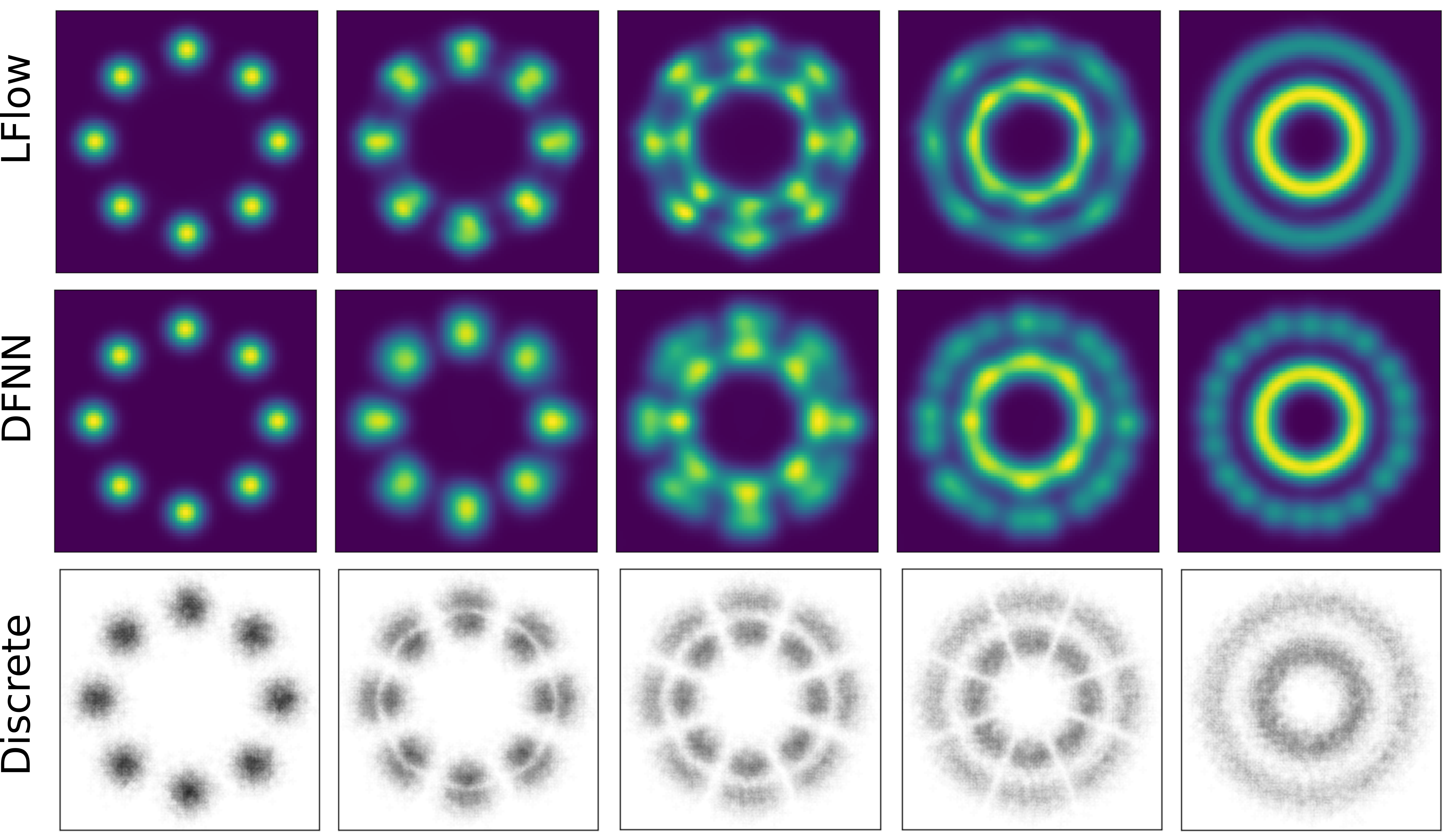}
    \caption{Approximations of the 2D optimal transport map for the 8Gaussians$\leftrightarrow$Circles dataset with LFlows, DFNNs, and a discrete reference.}
    \label{fig:appendix:ot1}
\end{figure}

\begin{figure}[ht]
\label{appendix:ot:pinwheel_gauss}
    \centering
    \includegraphics[width=0.6\linewidth]{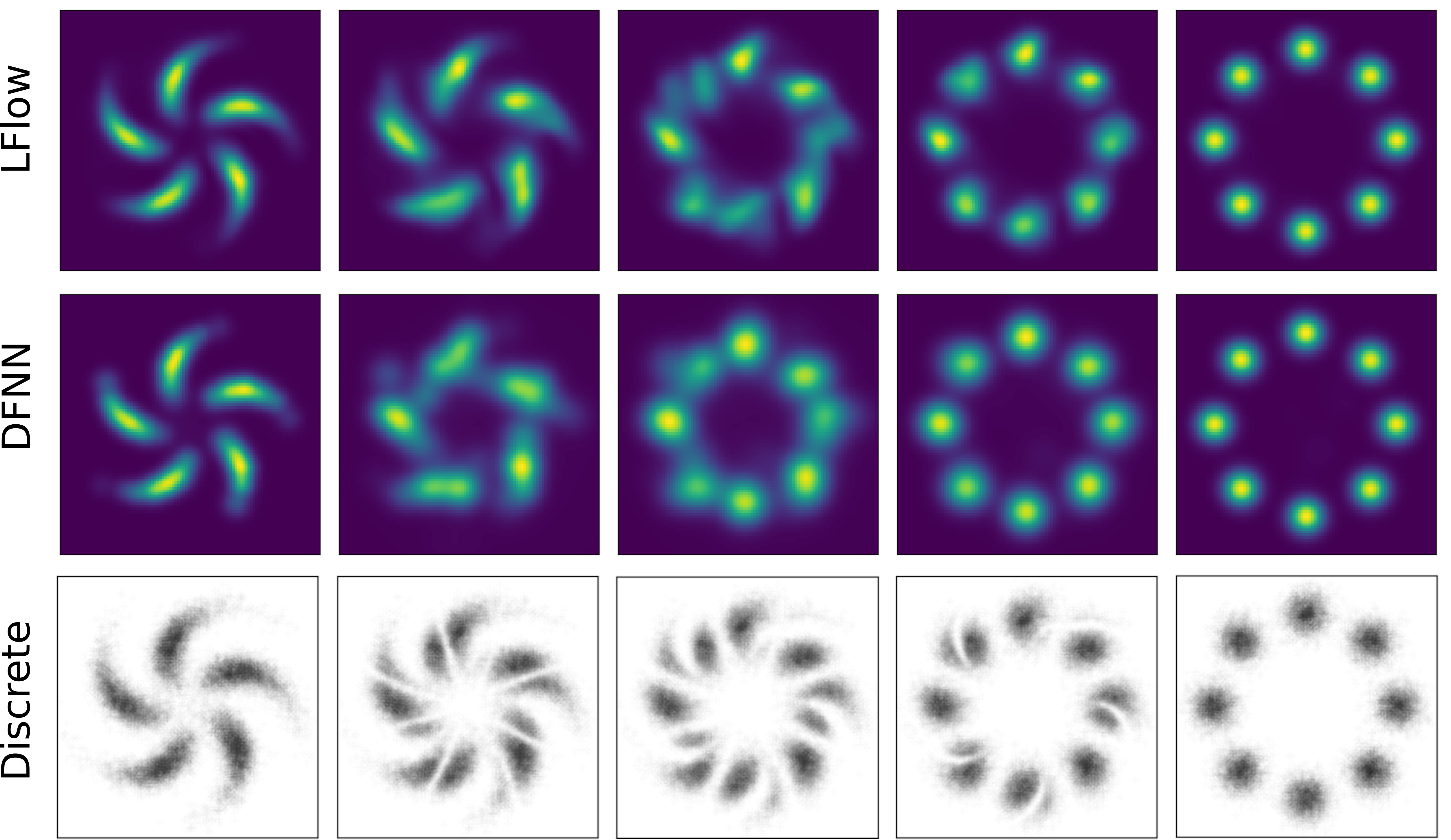}
    \caption{Approximations of the 2D optimal transport map for the Pinwheel$\leftrightarrow$8Gaussians dataset with LFlows, DFNNs, and a discrete reference.}
    \label{fig:appendix:ot2}
\end{figure}

\subsubsection{Evaluating the Integral}
For evaluating the integral required for the objective in Eq.~\ref{eq:dynamical_optimal_transport}, we reuse the code and method provided by \citet{physml:richter2022neural}.
The integral is estimated via importance sampling,
with the sampling distribution $q(t, \x) = q(t) q(\x)$.
Samples in time are drawn uniformly with $q(t)\sim \mathcal{U}(t_0, t_1)$. Samples in space are drawn from a uniform mixture 
\begin{equation}
    q(\x) = \frac{1}{3} p_{t_0}(\x) + \frac{1}{3}p_{t_1}(\x) + \frac{1}{3} \mathcal{U}_\Omega(\x)
\end{equation}
with $\mathcal{U}_\Omega$ being the uniform distribution on the domain. 

\subsubsection{Discrete estimate of W2}
For estimating the $W_2^2$ distance with a discrete reference method we rely on 50000 samples.
In our setting, more samples lead to convergence issues of the numerical solver.

As the discrete  $W_2^2$ estimate depends on the number of samples $n$, we explore  an increasing number of samples and repeated runs. 
For each fixed $n$ we repeat the estimate 10 times with different seeds and visualize the resulting box plots in \ref{appendix:ot:discrete_bias}.
We note that the differences with increasing sample size are comparatively small relative to the differences between the continuous methods in Section~\ref{subsection:ot}.

\begin{figure}[ht]
\label{appendix:ot:discrete_bias}
    \centering
    \includegraphics[width=\linewidth]{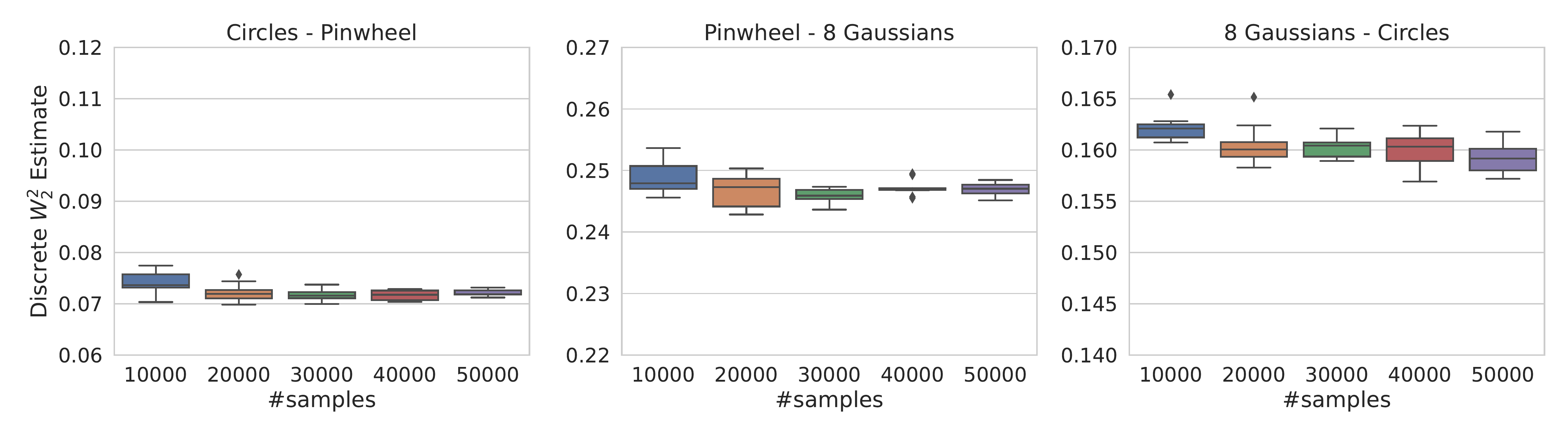}
    \caption{Change of the discrete $W_2^2$ estimate based on increasing number of samples.
    Each boxplot is based on 10 repeated runs with different seeds.}
\end{figure}

\subsubsection{Implementation}

\paragraph{LFlows.}
For LFlows we use 10 blocks of i-DenseNets, each preceded by an Activation Normalization layer.
Each i-DenseNet has a depth of 5 with sinusoidal activations $g(x) = \sin(15*x)/15$.
We enforce the Lipschitz constant of the i-DenseNet to be $0.97$. 
For the embedding of the time condition we use a residual neural network with 1 hidden layer, swish activations, and 128 hidden features.
The dimension of the time embedding, i.e. the output of the embedding network, is of dimension 10.
The model was trained with the ADAM optimizer for 5000 iterations with a learning rate of 2e-3 and 2048 data points per iteration.

\paragraph{DFNNs.}
For DFNNs we directly used the experiment code provided by \citet{physml:richter2022neural}.
That is, the DFNNS are based on the parameterization with subharmonic functions discussed in Section 7.1 of \citet{physml:richter2022neural}.
The 128 mixtures of the subharmonic function are parameterized with a 4 layer neural network with 96 hidden features and swish activations. A fixed $\lambda$ weight of 50 is set for the OT-penalty for all datasets. The DFNN is trained with the ADAM optimizer and a learning rate of 1e-3 over 10 000 iterations with 256 data points per iteration.

\paragraph{Computational Resources.}
The experiment was run on individual NVIDIA TITAN X GPUs (12GB VRAM), using 20 CPU cores and 20GB RAM.

\subsection{Experiment: Bird Migration}
\label{appendix:subsection:birdmigration}
\paragraph{About the data.}
The data provided by \citet{nussbaumer2021quantifying} is originally based on weather radar measurements made available by the \textit{European Operational Program for Exchange of Weather Radar Information} (EUMETNET/OPERA).
The vertical profiles, i.e. density and velocity estimates at different altitude levels, were provided by the\textit{ European Network for the Radar surveillance of Animal Movement} (ENRAM), based on \textit{vol2bird}\footnote{\url{https://github.com/adokter/vol2bird}}, an algorithm for preprocessing raw radar scans.
The raw data consists of volume scans from Doppler radars, measuring reflectivity and radial velocity of the surroundings.
By filtering out biological and environmental scatters, it is possible to retain scans that mostly contain bird movements. 
Based on the reflectivity and radial velocity, the average bird density and velocity within a 15km radius is estimated for multiple altitude bins.
For details about this process, we refer to \citet{dokter2011bird}.
The final density and velocity measurements we use are openly available\footnote{\url{https://zenodo.org/record/4587338/}}.

\paragraph{Preprocessing.}
The positions of the radars are given in the WGS84 coordinate reference system.
We project it to the Cartesian reference system ETRS89-extended (EPSG:3035),
effectively projecting longitude/latitude to $x$ and $y$ coordinates given in meters.
As an additional preprocessing step, we excluded velocities that were measured together with a near-zero density.
To generate the data set used in our experiment, we directly concatenated multiple nights and remove the daytime during which no measurements are available.
In the supplementary material we provide code for downloading and preprocessing the data.

\paragraph{Numerical Evaluation of Physical Consistency}
We compare the predicted density $\hat{\rho}(t,\x)$ with the numerical solution of the initial value problem uniquely defined by $\hat{\rho}(t_0,\x)$, $\hat{\vel}(t, \x)$ and Eq.~\ref{eq:CNF}.
The time of the earliest available datapoint within the three nights serves as $t_0$.
We evaluated the error at a spatial equidistant grid of sidelength 50 in the $xy$ dimension.
The $z$ coordinate is randomly sampled for each of the grid entries.
We evaluate the consistency loss at these $xyz$ coordinates 10 time steps between the start and end of the three selected nights,
and average the sMAPE over all $xyzt$ coordinates.
For all models we use a 8th order Dormand-Prince ODE solver with an absolute and relative tolerance of 1e-5,
which is one order higher than the ODE solver used for SLDA.
The MLP has no $z$-component of the velocity, as no measurements of it are in the data.
We thus set this component of the MLP to zero for computing the numerical ODE solution.
The other models indirectly learn a $z$-component by enforcing the CE in 3D.

\subsubsection{Training and Architecture details}
\paragraph{LFlows.}
As first layer of the LFlow we use a $atanh$ bijection that constrains $\Omega$ to a rectangular volume that is multiple times larger than the spatial extent of the radar positions.
The following layers consist of 10 blocks of invertible Dense Nets (i-DenseNet) \citep{perugachi2021invertible}, where we instead use sinusoidal activations $g(x) = \sin(10*x)/10$. Each block has a depth of 5 and before each block we use an Activation Normalization layer. We enforce the Lipschitz constant of the i-DenseNet to be $0.97$. 
We use a 2 layer residual network with a width of 128 for embedding the time before passing it to the i-DenseNet.

We initialized the log of the normalization constant $\ln(c)$ with $18.2$ and set the weight of the total mass penalty to 1e-3. We trained for 50 epochs with a minibatch size of 16384 using the ADAM optimizer with a learning rate of 1e-2, a weight decay of 2e-3 and a cosine annealing learning rate schedule. 


\paragraph{SLDA.}
For the SLDA we represent the initial density with a grid of size $20\times20\times20$.
Values inbetween mesh points are evaluated with bilinear interpolation.
We parameterize the velocity network with a hypernetwork as provided by the code of \citet{nflows:grathwohl2018ffjord}, similar to the simulated fluid flow experiment. The network consists of 5 layers with 512 hidden features and swish activations.
We trained for 300 epochs with a minibatch size of 16384 using the ADAM optimizer with a learning rate of 1e-3, a weight decay of 5e-3 and a cosine annealing learning rate schedule. 

\paragraph{DFNN.}
For the DFNN we use the parameterization for non-negative densities based on subharmonic functions (Section 7.1 \cite{physml:richter2022neural}).
The model consists of 4 layers with 256 hidden features and Swish activations, which parameterize 64 mixture components for the subharmonic function. We used a minibatch size of 2048 due to memory constraints.
The model was trained for 100 epochs with a cosine annealing learning rate schedule.

\paragraph{MLP.}
We trained a 10 layer residual neural network with relu activations, intermediate batch normalization and 256 hidden features.
The model was trained for 100 epochs with a minibatch size of 16384 using the ADAM optimizer with a learning rate of 1e-3, a weight decay of 1e-3 and a cosine annealing learning rate schedule.

\paragraph{PINN.}
For the PINN we use the same architecture as for the MLP, but with an additional PDE loss.
We use 100 000 collocation points sampled from a quasi-random Sobol sequence and weight the PDE loss with 7e-4.
The PINN was trained for 300 epochs with a minibatch size of 16384 using the ADAM optimizer with a learning rate of 1e-2, a weight decay of 1e-2, and a cosine annealing learning rate schedule

\paragraph{Computational Ressources.}
The experiment was run on an A100 GPU (40GB VRAM), using 20 CPUs and 30GB RAM.
Repeated experiments were parallelized on a SLURM-based compute cluster.


\newpage
\subsubsection{Predictions of all models}
\label{appendix:subsubsection:birdallmodels}
\begin{figure}[h]
    \centering
    \includegraphics[width=0.9\linewidth]{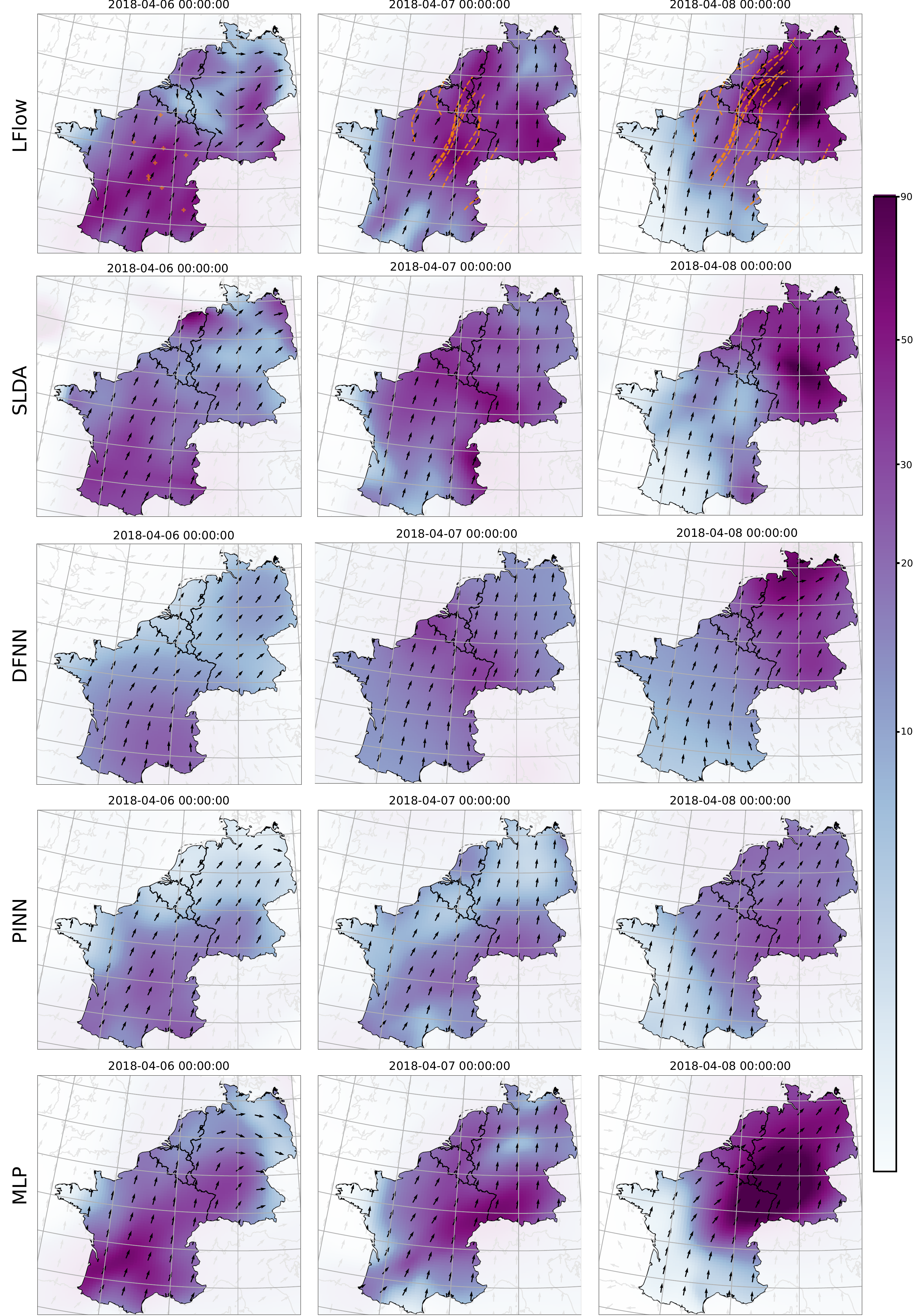}
    \caption{Predicted density and normalized velocity integrated over z-axis for all considered models.}
    \label{fig:appendix:allpreds}
\end{figure}

\subsubsection{Total Mass Penalty}
\label{appendix:subsubsection:regularization}
To motivate the total mass penalty we showcase its effect on bird migration predictions using LFlows with varying penalty weights.
The different predictions are shown in Figure \ref{fig:ablation:mass:2}

Without any penalty, the network can freely explain the training data with large amounts of total mass. This leads to significant and never observed densities outside of the observed region.
However, with an active total mass penalty effectively being a zero prior the network is instead encouraged to decrease the total mass.
The total mass penalty weight allows to explore this aspect of the ill-posedness of the problem.

\begin{figure}[h!]
    \centering
    \includegraphics[width=\linewidth]{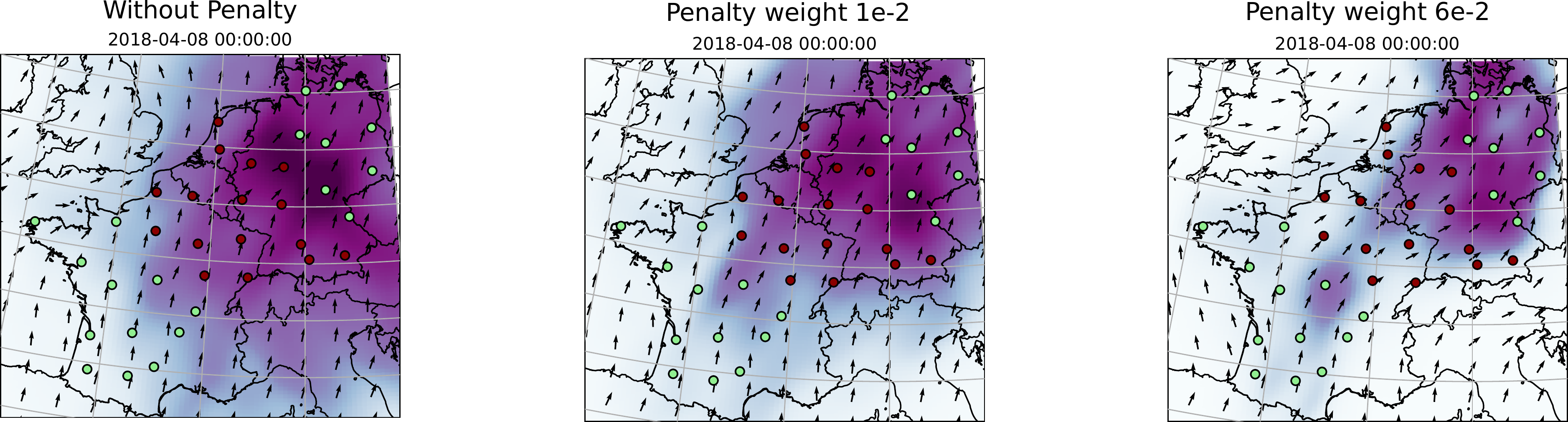}
    \caption{Predictions of the LFlow trained without \textit{(left)} and with (\textit{right}) total mass penalty.
    While predictions at observed radar stations barely change, the total mass outside of the observed region is significantly reduced.}
    \label{fig:ablation:mass:2}
\end{figure}

The variation in predictions can be further summarized in a single visualization.
Similar to before, we train different models for varying total mass penalty weights. 
We calculate the relative standard deviation of the predicted flux (i.e. the product of density and velocity) at each spatial location for a fixed time.
Areas with higher relative standard deviations correspond to areas with largely varying explanations.
Figure \ref{fig:bird:uncertainty} shows the resulting map for a single time frame for the bird migration setting.
The areas which are never observed and are completely dependent on a prior show the highest variation.
Areas closer to the train radar stations (light green points) have a lower variation.

\begin{figure}[h!]
    \centering
    \includegraphics[width=0.4\linewidth]{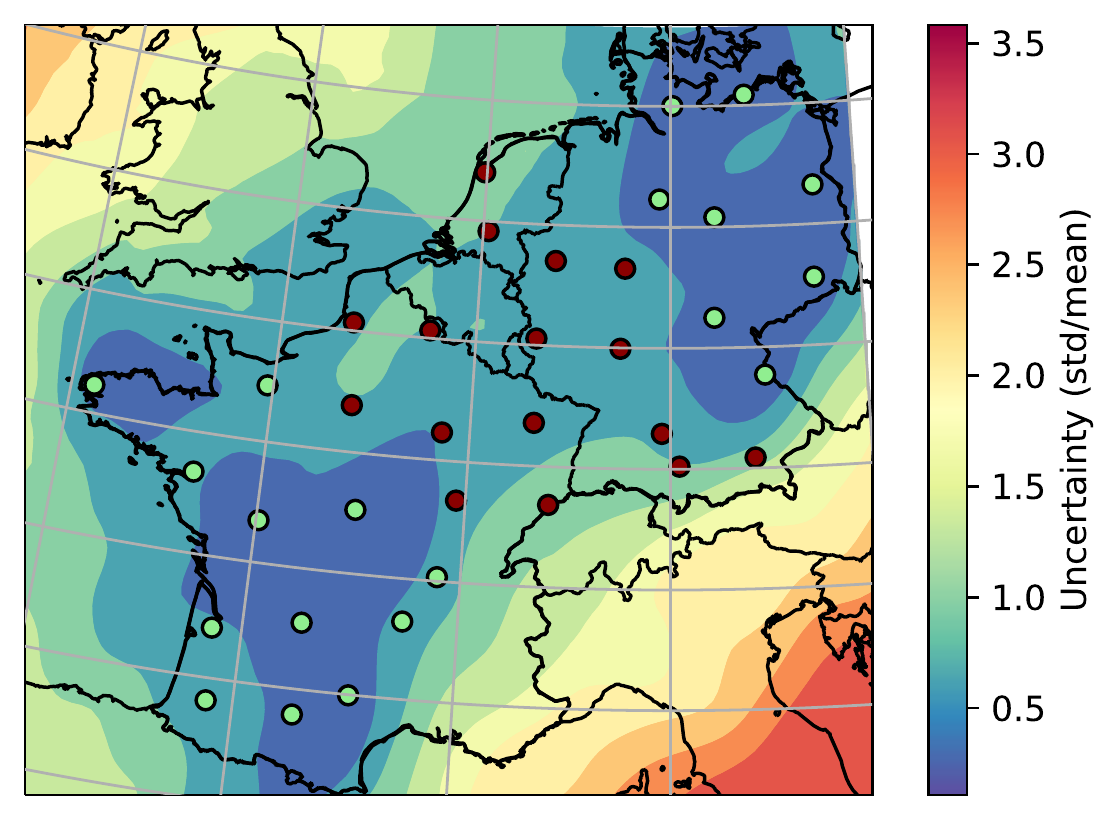}
    \caption{Relative standard deviation of predicted flux with varying mass penalties.}
    \label{fig:bird:uncertainty}
\end{figure}

\end{document}